\title{Egalitarian Judgment Aggregation}
\author{Sirin Botan}
\affiliation{
	\institution{University of Amsterdam, The Netherlands}}
\email{sirin.botan@uva.nl}
\author{Ronald de Haan}
\affiliation{
	\institution{University of Amsterdam, The Netherlands}}
\email{me@ronalddehaan.eu}
\author{Marija Slavkovik}
\affiliation{
	\institution{University of Bergen, Norway}
}
\email{marija.slavkovik@uib.no}
\author{Zoi Terzopoulou}
\affiliation{
	\institution{University of Amsterdam, The Netherlands}}
\email{z.terzopoulou@uva.nl}
\begin{abstract}
	Egalitarian considerations play a central role  in many areas of social choice theory. Applications of egalitarian principles range from ensuring everyone gets an equal share of a cake when deciding how to divide it, to guaranteeing balance with respect to gender or ethnicity in committee elections. Yet, the egalitarian approach has received little attention  in judgment aggregation---a powerful framework for aggregating logically interconnected issues. We make the first steps towards filling that gap. We introduce axioms capturing two classical interpretations of egalitarianism in judgment aggregation and situate these within the context of existing axioms in the pertinent framework of belief merging. We then explore the relationship between these axioms and several notions of strategyproofness from social choice theory at large. Finally, a novel egalitarian judgment aggregation rule stems from our analysis; we present complexity results concerning both outcome determination and strategic manipulation for that rule.
\end{abstract}
\keywords{Social Choice Theory, Judgment Aggregation, Egalitarianism, Strategic Manipulation, Computational Complexity}
\renewcommand{\phi}{\varphi}
\DeclareMathOperator*{\argmin}{argmin}
\newcommand{\N}{N}
\newcommand{\Pf}[0]{\ensuremath{\bm{J}}}
\newcommand{\pess}{\succ^{\textit{pess}}}
\newcommand{\opt}{\succ^{\textit{opt}}}
\newcommand{\setsucc}{\mathrel{\mathring{\succ}}}
\newcommand{\setsucceq}{\mathrel{\mathring{\succeq}}}
\newcommand{\NP}{\normalfont\textsf{NP}}
\newcommand{\SigmaP}[1]{\ensuremath{\Sigma^{\text{\normalfont\textsf{p}}}_{#1}}}
\newcommand{\ThetaP}[1]{\ensuremath{\Theta^{\text{\normalfont\textsf{p}}}_{#1}}}
\newcommand{\FNP}{\normalfont\textsf{FNP}}
\newcommand{\PNPlog}[0]{\ensuremath{\normalfont\textsf{P}^{\textsf{NP}}\textsf{[log]}}}
\newcommand{\PNPpar}[0]{\ensuremath{\normalfont\textsf{P}^{\textsf{NP}}_{\smash{\scalebox{0.7}{||}}}}}
\newcommand{\PNPlogwit}[0]{\ensuremath{\normalfont\textsf{P}^{\textsf{NP}}\textsf{[log,wit]}}}
\newcommand{\FPNPlogwit}[0]{\ensuremath{\normalfont\textsf{FP}^{\textsf{NP}}\textsf{[log,wit]}}}
\newcommand{\FNPOptPlog}[0]{\ensuremath{\normalfont\textsf{FNP}/\!\!/\textsf{OptP[log]}}}
\theoremstyle{defn}
\newtheorem{defn}{Definition}
\newtheorem{ex}{Example}
\theoremstyle{plain}
\newtheorem{thm}{Theorem}
\newtheorem{prop}{Proposition}
\newtheorem{cor}{Corollary}
\newtheorem{lem}{Lemma}
\newtheorem{cla}{Claim}
\newtheorem{obs}[cla]{Observation}
\newenvironment{claimproof}[1][Proof]{\begin{proof}[#1]}{\end{proof}}
\newcommand{\duplicatethm}[2]{{
		\renewcommand{\thethm}{#1}
		\begin{thm}
			#2
		\end{thm}
		\addtocounter{thm}{-1}
}}
\newcommand{\duplicateprop}[2]{{
		\renewcommand{\theprop}{#1}
		\begin{prop}
			#2
		\end{prop}
		\addtocounter{prop}{-1}
}}
\def\ContinueLineNumber{\lstset{firstnumber=last}}
\def\StartLineAt#1{\lstset{firstnumber=#1}}
\tiny\color{black!40},
\newcommand{\inlinecode}[1]{\texttt{\small #1}}
\DeclareRobustCommand{\DE}[3]{#2}
\begin{document}
	
	
	\pagestyle{fancy}
	\fancyhead{}
	
	
	\maketitle 
	

	\section{Introduction}
	Judgment aggregation is an area of social choice theory concerned with turning the
	individual binary judgments of a group of agents over logically related issues into a collective judgment~\citep{endriss2016}. 
	Being a flexible and widely applicable framework, judgment aggregation provides the foundations for collective decision making settings in various disciplines, like philosophy, economics, legal theory, and artificial intelligence \citep{grossi2014judgment}. The purpose of judgment aggregation methods
	(\emph{rules}) is to find those collective judgments that better represent the group as a whole.  Following the utilitarian approach in social choice, an ``ideal" such collective judgment has traditionally been considered the will of the majority. In this paper we challenge this perspective, introducing a more egalitarian point of view. 
	
	
	
	In economic theory, utilitarian approaches are often contrasted with egalitarian ones~\citep{Moulin}.  In the context of judgment aggregation,  an egalitarian rule must take into account whether the collective outcome achieves equally distributed satisfaction among agents and  ensure that agents enjoy equal consideration. A rapidly growing application domain of egalitarian judgment aggregation (that also concerns multiagent systems with practical implications like in the construction of self-driving cars) is the aggregation of moral choices~\citep{Conitzer:2017}, where utilitarian approaches do not always offer appropriate solutions~\citep{noothigattu2018,Baum2017}. One of the  drawbacks of majoritarianism is that a strong enough majority can cancel out the views of a minority, which is  questionable in several occasions.
	
	For example, suppose that the president of a student union has secured some budget for the decoration of the union's office and she asks her colleagues for their opinions on which paintings to buy (perhaps imposing some constraints on the combinations of paintings that can be simultaneously selected, due to clashes on style). If the members of the union largely consist of pop-art enthusiasts that the president tries to satisfy, then a few members with diverting taste will find themselves in an  office that they detest; an arguably more viable strategy would be to ensure that---as much as possible---no-one is strongly dissatisfied. But then, consider a similar situation in which a kindergarten teacher needs to decide what toys to complement the existing playground with. In that case, the teacher's goal is to select toys that equally (dis)satisfy all kids involved, so that no extra tension is created due to envy, which the teacher will have to resolve---if the kids disagree a lot, then the teacher may end up choosing toys that none of them really likes. 
	
	In order to formally capture scenarios like the above, this paper introduces two fundamental properties (also known as \emph{axioms}) of egalitarianism to judgment aggregation, inspired by the theory of justice. The first captures the idea behind the so-called \emph{veil of ignorance} of \citet{rawls1971theory}, while the second speaks about how happy agents are with the collective outcome relative to each other. 
	
	Our axioms closely mirror properties in other areas of social choice theory. In \emph{belief merging}, egalitarian axioms and merging operators have been studied by \citet{everaere2014egalitarian}. The nature of their axioms is in line with the interpretation of egalitarianism in this paper, although the two main properties they study are logically weaker than ours, as we further discuss in Section~\ref{sec:bel-merg}.
	In  \emph{resource allocation}, fairness has  been interpreted both as maximising the share of the worst off agent \citep{budish2011combinatorial} as well as eliminating envy between agents \citep{foley1967resource}. In  \emph{multiwinner elections}, egalitarianism is present in diversity \citep{elkind2017properties} and in proportional representation \citep{dummet84,aziz2017justified} notions.

	Unfortunately, egalitarian considerations often come at a cost. A central concern in many areas of social choice theory, of which judgement aggregation does not constitute an exception, is that agents may have incentives to \emph{manipulate}, i.e., to misrepresent their judgments aiming for a more preferred outcome \citep{dietrich2007strategy}.  Frequently, it is impossible to simultaneously be fair and avoid strategic manipulation. For both variants of fairness in resource allocation, rules satisfying them usually are susceptible to strategic manipulation \citep{chen2013truth,amanatidis2016,brams2008proportional,mossel2010truthful}. The same type of results have recently been obtained for multiwinner elections \citep{lackner2018approval,peters2018proportionality}. It is not easy to be egalitarian while disincentivising agents from taking advantage of it.
	
	Inspired by notions of manipulation stemming from voting theory, we explore how our egalitarian axioms affect the agents' strategic behaviour within  judgment aggregation. Our most important result in this vein is showing that the two properties of egalitarianism defined in this paper clearly differ in terms of strategyproofness. 
	
	Our axioms give rise to two concrete egalitarian  rules---one that has been previously studied, and one that is new to the literature. For the latter, we are interested in exploring how computationally complex its use is in the worst-case scenario. This kind of question, first addressed by \citet{EndrissEtAlJAIR2012}, is regularly asked in the literature of judgment aggregation \citep{baumeister2013computational,lang2014hard,EndrissDeHaanAAMAS2015}. As \citet{EndrissEtAlJAIR2020} wrote recently, the problem of determining the collective outcome of a given judgment aggregation rule  is ``the most fundamental algorithmic challenge in this context''. 
	
	The remainder of this paper is organised as follows. Section~\ref{sec:model} reviews the basic model of judgment aggregation, while Section~\ref{sec:axioms} introduces our two original axioms of egalitarianism and the rules they induce. Section~\ref{sec:manip} analyses the relationship between egalitarianism and strategic manipulation in judgment aggregation, and Section~\ref{sec:computation} focuses on relevant computational aspects: although the general problems of outcome determination and of strategic manipulation are proven to be very difficult, we propose a way to confront them with the tools of \emph{Answer Set Programming}~\citep{Gelfond08}.

	\section{Basic Model} \label{sec:model}
	
	Our framework relies on the standard formula-based model of judgment aggregation \citep{LP2002}, but for simplicity we also use notation commonly employed in binary aggregation \citep{GrandiEndrissAAAI2010}.
	
	
	Let $\mathbb{N}$ denote the (countably infinite) set of all agents that can potentially participate in a judgment aggregation setting. In every specific such setting, a finite set of agents $N \subset \mathbb{N}$ of size~$n\geq 2$ express judgments on a finite and nonempty set of \emph{issues} (formulas in propositional logic) $\Phi = \{\phi_1, \dots, \phi_m \}$,  called the \emph{agenda}.  $\mathcal{J}(\Phi) \subseteq  \{0,1 \}^m$  denotes the set of all admissible opinions on~$\Phi$. Then, a \emph{judgment}~$J$ is a vector in $\mathcal{J}(\Phi)$, with 1 (0) in position~$k$ meaning that the issue $\phi_k$ is accepted (rejected). $\overline{J}$ is the \emph{antipodal} judgment of~$J$: for all $\phi \in \Phi$, $\phi$ is accepted in~$\overline{J}$ if and only if it is rejected in~$J$. 
	
	A \emph{profile} $\Pf = (J_1, \dots J_n) \in \mathcal{J}(\Phi)^n$ is a vector of individual judgments, one for each agent in a group~$N$. We write $\Pf' =_{-i} \Pf$  when the profiles~$\Pf$ and~$\Pf'$ are the same, besides the judgment of agent~$i$. We write $\Pf_{-i}$ to denote the profile $\Pf$ with agent $i$'s judgment removed, and $(\Pf, J) \in \mathcal{J}(\Phi)^{n+1}$ to denote the profile $\Pf$ with judgment $J$ added. 
	A \emph{judgment aggregation rule}~$F$ is a function that  maps every possible profile~$\Pf \in \mathcal{J}(\Phi)^n$, for every group~$N$ and agenda~$\Phi$, to a nonempty set $F(\Pf)$ of collective judgments  in~$\mathcal{J}(\Phi)$. Note that a judgment aggregation rule is defined over groups and agendas of variable size, and  may return several, tied, collective judgments.
	
	The agents that participate in a judgment aggregation scenario will naturally have preferences over the outcome produced by the aggregation rule. First, given an agent~$i$'s truthful judgment~$J_i$,  we need to determine when agent~$i$ would prefer a judgment~$J$ over a different judgment~$J'$. The most prevalent type of such preferences considered in the judgment aggregation literature is that of \emph{Hamming distance} preferences~\citep{baumeister2015complexity,baumeister2017strategic,TerzopoulouEndrissAAAI2018,BotanEndrissAAMAS2020}.
	
	The Hamming distance between two judgments $J$ and $J'$ equals the number of issues on which these judgments disagree---concretely, it is defined as $H(J,J')= \sum_{\phi \in \Phi} |J(\phi) - J'(\phi)|$, where $J(\phi)$ denotes the binary value in the position of~$\phi$ in~$J$. For example, $H(100, 111)=2$. Then, the (weak, and analogously strict) preference of agent~$i$ over judgments is defined by the relation $\succeq_i$ (where $J \succeq_i J'$ means that $i$'s utility from $J$ is higher than that from~$J'$):
	\[ J \succeq_i J' \text{ if and only if } H(J_i, J) \leq H(J_i, J').\]
	
	\noindent  But an aggregation rule often outputs more than one judgment, and thus we also need to determine agents' preferences over sets of judgments.\footnote{Various approaches have been taken within the area of social choice theory in order to extend preferences over objects  to preferences over sets of objects 
		---see \citet{barbera2004ranking} for a review.} 
	We define two requirements guaranteeing that the preferences of the agents over sets of judgments are consistent with their preferences over single judgments.  To that end, let $\setsucceq_i$ (with strict part $\setsucc_i$) denote agent $i$'s preferences over sets $X,Y \subseteq \mathcal{J}(\Phi)$. We require that $\setsucceq_i$ is related to $\succeq_i$ as follows: 
	\begin{itemize}
		\item $J \succeq_i J'$  if and only if $\{J\} \setsucceq_i \{J'\}$, for any $J,J' \in \mathcal{J}(\Phi)$;
		\item $X \setsucc_i Y$ implies that there exist some $J \in X$ and $J' \in Y$ such that $J \succ_i J'$ and $\{J,J'\} \not\subseteq X \cap Y$. 
	\end{itemize}
	
	\noindent The above conditions hold for almost all well-known preference extensions. For example, they hold for the \emph{pessimistic} preference ($X \pess Y$ if and only if there exists $J'\in Y$ such that $J \succ J'$ for all $J \in X$) and the \emph{optimistic} preference ($X \opt Y$ if and only if there exists $J\in X$ such that $J \succ J'$ for all $J' \in Y$) of \citet{duggan2000strategic}, as well as the preference extensions of \citet{gardenfors1976manipulation} and \citet{kelly1977strategy}. The results provided in this paper abstract away from specific preference extensions.

	
	\section{Egalitarian Axioms and Rules} \label{sec:axioms}
	This section focuses on two axioms of egalitarianism in judgment aggregation. We examine them in relation to each other and to existing properties from belief merging, as well as to the standard majority property defined below. Most of the well-known judgment aggregation rules return the majority opinion, when that opinion is logically consistent~\citep{endrissHB2016}.\footnote{A central problem in judgment aggregation concerns the fact that the issue-wise majority is not always logically consistent \citep{LP2002}.} 
	
	Let $m(\Pf)$ be the judgment that accepts exactly those issues accepted by a strict majority of agents in~$\Pf$. A rule~$F$ is \emph{majoritarian} when for all profiles~$\Pf$, $m(\Pf) \in \mathcal{J}(\Phi)$ implies that $F(\Pf)=\{J\}$.

	Our first axiom with an egalitarian flavour is the \emph{maximin property}, suggesting that we should aim at maximising the utility of those agents that will be worst off in the outcome. Assuming that everyone submits their truthful judgment during the aggregation process, this means that we should try to minimise the distance of the agents that are furthest away from the  outcome. Formally:
	\begin{itemize}
		\item[$\blacktriangleright$] A  rule $F$ satisfies the \textbf{maximin} property if for all profiles~$\Pf \in \mathcal{J}(\Phi)^n$ and judgments $J\in F(\Pf)$ there do not exist judgment $J' \in \mathcal{J}(\Phi)$ and  agent $j\in \N$ such that
		\[  H(J_i,J') < H(J_j,J) \text{ for all } i \in \N.
		\]
	\end{itemize}
	
	
	\noindent Although the maximin property is quite convincing, there are settings  like those motivated in the Introduction where it does not offer sufficient egalitarian guarantees. We thus consider a different property next, which we call the \emph{equity property}. This axiom requires that the gaps in the agents' satisfaction be minimised. In other words, no two agents should find themselves in very different distances with respect to the collective outcome. 
	Formally:
	\begin{itemize}
		\item[$\blacktriangleright$]
		A rule $F$ satisfies the \textbf{equity} property if for all profiles~$\Pf \in \mathcal{J}^n$ and judgments $J\in F(\Pf)$, there do not exist judgment $J' \in \mathcal{J}(\Phi)$ and agents $i',j'\in \N$ such that
		\[   |H(J_{i}, J')- H(J_{j},J')|  < |H(J_{i'}, J) - H(J_{j'}, J)| \text{ for all }  i,j \in N.
		\]
	\end{itemize}
	
	\noindent No rule that satisfies either the maximin- or equity property can be majoritarian.\footnote{This includes popular rules like the median rule~\cite{NPP2014condorcet}---known under a number of other names, notably \emph{distance-based rule}~\cite{PigozziSyn2006}, \emph{Kemeny rule}~\cite{endrissHB2016}, and \emph{prototype rule}~\cite{MillerOshersonSCW2009}.} As an  
	illustration, in a profile of only two agents who disagree on some issues, any egalitarian rule will try to reach a compromise, and this compromise will not be affected if any agents holding one of the two initial judgments are added to the profile---in contrast, 
	a majoritarian rule will simply conform to the crowd. 
	
	Proposition~\ref{prop:minmax-egal} shows that it is also impossible for the maximin property and the equity property to simultaneously hold. Therefore, we have established the logical independence of all three axioms discussed so far: maximin, equity, and majoritarianism.

	\begin{prop}\label{prop:minmax-egal}
		No judgment aggregation rule can satisfy both the maximin property and the equity property. 
	\end{prop}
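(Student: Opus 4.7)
My plan is to prove the impossibility by exhibiting a single profile on which the maximin-optimal judgments and the equity-optimal judgments form disjoint subsets of the admissible set. Since $F(\Pf)$ must be nonempty and contained in $\mathcal{J}(\Phi)$, this will force any rule to output a judgment violating one of the two axioms.

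I would take an agenda of $m = 4$ binary issues together with an integrity constraint restricting the admissible set to $\mathcal{J}(\Phi) = \{0000, 1100, 1011, 0111\}$, and consider the two-agent profile $\Pf = (J_1, J_2)$ with $J_1 = 0000$ and $J_2 = 1100$. A direct computation gives the distance pairs $(H(J_1, J), H(J_2, J)) = (0,2), (2,0), (3,3), (3,3)$ for $J = 0000, 1100, 1011, 0111$ respectively; hence the worst-case distance is $2$ on $\{0000, 1100\}$ and $3$ on $\{1011, 0111\}$, while the distance gap is $2$ on $\{0000, 1100\}$ and $0$ on $\{1011, 0111\}$. The impossibility will then follow by a case analysis on any $J \in F(\Pf)$: for $J \in \{0000, 1100\}$ I would take $J' = 1011$ as a witness (all Hamming gaps in $J'$ equal $0 < 2$) to violate equity; for $J \in \{1011, 0111\}$ I would take $J' = 0000$ as a witness (since $H(J_1, 0000) = 0$ and $H(J_2, 0000) = 2$ are both strictly less than $H(J_j, J) = 3$ for either $j$) to violate maximin.

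The main obstacle is designing the admissible set so that no single compromise judgment satisfies both axioms at once. With an unrestricted agenda, the judgments $0100$ and $1000$ would sit at distance $1$ from both agents and simultaneously minimise both the worst-case distance and the distance gap, and the weight-$2$ compromises $0101, 0110, 1001, 1010$ would also achieve a zero gap at distance~$2$. Excluding all of these via the integrity constraint leaves only the equidistant judgments $1011$ and $0111$ as alternatives to $J_1, J_2$, and those lie strictly farther from the agents than the agents lie from each other—creating the necessary tension between minimising the worst-case distance and equalising the agents' satisfaction.
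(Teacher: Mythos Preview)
Your proof is correct and follows essentially the same approach as the paper: construct a two-agent profile on a restricted admissible set so that the judgments compatible with the maximin property and those compatible with the equity property are disjoint, forcing any nonempty outcome to violate one of the two axioms. The paper uses a six-issue agenda with admissible set $\{110000, 010000, 001100, 111111\}$ and profile $(110000, 001100)$, while your four-issue instance is slightly more compact, but the underlying idea is identical.
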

	
	\begin{proof}
		Take an agenda $\Phi$ where $\mathcal{J}(\Phi)$ consists of the nodes in the graph below and consider the profile~$\Pf=(J_1,J_2)$. Each edge is labelled with the Hamming distance between the judgments.
		\begin{center}
			\begin{tikzpicture}
			\tikzset{vertex/.style = {shape=circle,draw,minimum size=1.5em}}
			\tikzset{edge/.style = {->,> = latex'}}
			\tikzset{every loop/.style={min distance=15mm,in=130,out=50, looseness=7}}
			
			\node (a) at (0,0) {$J_1: 110000$};
			\node (b) at (0, -0.8) {$J: 010000$};
			\node (c) at (0, -1.6) {$J_2: 001100$};
			\node (d) at (2.5, -0.8) {$J': 111111$};
			
			\path[-, thick] (a) edge  [left] node {$1$} (b);
			\path[-, thick] (b) edge  [left] node {$3$} (c);
			\path[-, thick] (a) edge  [above] node {$4$} (d);
			\path[-, thick] (c) edge  [below] node {$4$} (d);
			\end{tikzpicture}
		\end{center}
		
		\noindent Every aggregation rule satisfying the maximin property will return $\{J\}$, as this judgment maximises the utility of the worst off agent---in this case, agent 2. However every rule satisfying the equity property will return $\{J'\}$, as this judgment minimises the difference in utility between the best off and worst off agents. Thus, there is no rule that can satisfy the two properties at the same time. \end{proof}
	
	\noindent From Proposition~\ref{prop:minmax-egal}, we also know now that the two properties of egalitarianism generate two disjoint classes of aggregation rules. In particular, in this paper we focus on the \emph{maximal} rule that meets each property: a rule~$F$ is the maximal one of a given class if, for every profile~$\Pf$, the outcomes obtained by any other rule in that class are always outcomes of~$F$ too.\footnote{Of course, several natural refinements of these rules can be defrined, with respect to various other axiomatic properties that we may find desirable. Identifying and studying such rules is an interesting direction for future research.} 
	
	The maximal rule satisfying the maximin property is the rule \emph{MaxHam} (see, e.g., \citeauthor{lang2011judgment}, \citeyear{lang2011judgment}). For all profiles~$\Pf  \in \mathcal{J}(\Phi)^n$,
	\[\text{MaxHam}(\Pf) = \argmin_{J \in \mathcal{J}(\Phi)} \max_{i\in\N} H(J_i,J). \]
	
	\noindent Analogously, we define a rule new to the judgment aggregation literature, which is the maximal one satisfying the equity property. For all profiles~$\Pf  \in \mathcal{J}(\Phi)^n$,
	\[\text{MaxEq}(\Pf) = \argmin_{J \in \mathcal{J}(\Phi)} \max_{i,j\in\N} |H(J_i,J)-H(J_j,J)|. \]
	
	\noindent To better understand these rules, consider an agenda with six issues: $p,q,r\equiv p\wedge q$, and their negations. Suppose that there are only two agents in a profile~$\Pf$, holding judgments $J_1=(111)$ and $J_2=(010)$. Then, we have that $\text{MaxHam}(\Pf)=\{(111),(010)\}$, while $\text{MaxEq}=\{(000),(100)\}$. In this example, the difference in spirit between the two rules of our interest is evident. Although the MaxHam rule is able to fully satisfy exactly one of the agents without causing much harm to the other, it still creates greater unbalance than the MaxEq rule, which ensures that the two agents are equally happy with the outcome (under Hamming-distance preferences). In that sense, MaxEq is better suited for a group of agents that do not want any of them to feel particularly put upon, while MaxHam seems more desirable when a minimum level of happiness is asked for.  
	
	MaxHam generalises minimax approval voting \cite{brams2007minimax}, which is the special case without logical constraint on the judgments, meaning agents may approve any subset of issues. \citet{brams2007minimax} show that MaxHam remains manipulable in this special case. As finding the outcome of minimax is computationally hard, \citet{caragiannis2010approximation} provide approximation algorithms that circumvent this problem. They also demonstrate the interplay between manipulability and lower bounds for the approximation algorithm---establishing strategyproofness results for approximations of minimax. 
	\subsection{Relations with Egalitarian Belief Merging}\label{sec:bel-merg}
	A framework closely related to ours is that of belief merging~\cite{KoniecznyP11}, which is concerned with how to aggregate several (possibly inconsistent) sets of beliefs into one consistent belief set.\footnote{We refer to \citet{BMtrends} for a detailed comparison of the two frameworks.} Egalitarian belief merging is studied by \citet{everaere2014egalitarian}, who examine interpretations of the \emph{Sen-Hammond equity condition}~\cite{sen1997choice} and the \emph{Pigou-Dalton transfer principle}~\cite{daltonequity}---two properties that are logically incomparable.\footnote{Another egalitarian property in belief merging is the \emph{arbitration postulate}. We do not go into detail on this postulate, but refer the reader to \citet{KoniecznyP11}.} We situate our egalitarian axioms within the context of these egalitarian axioms from belief merging;
	we reformulate these axioms into our framework. 
	
	\begin{itemize}
		\item[$\blacktriangleright$] Fix an arbitrary profile $\Pf$, agents $i, j$, and any three judgment sets $J, J' \in \mathcal{J}(\Phi)$. An aggregation rule $F$ satisfies the \textbf{Sen-Hammond equity property} if  whenever
		\[H(J_i, J) < H(J_i,J') < H(J_j,J') < H(J_j, J)\] and $H(J_{i'}, J) = H(J_{i'}, J')$ for all other agents $i' \in N\setminus \{i,j\}$, then $J \in F(\Pf)$ implies $J' \in F(\Pf)$.
		
	\end{itemize}
	
	\begin{prop}
		If a rule satisfies either the maximin property or the equity property, then it will satisfy the Sen-Hammond equity property. 
	\end{prop}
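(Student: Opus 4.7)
Plan: I handle both halves of the proposition in parallel, since each reduces to showing that the relevant egalitarian quantity weakly improves when passing from $J$ to $J'$ under the Sen--Hammond conditions. A shared starting point is the observation that every agent $k \notin \{i,j\}$ has $H(J_k, J) = H(J_k, J')$ by hypothesis, so only the contributions of $i$ and $j$ need real analysis.

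For the maximin half, I would bound $\max_{k \in N} H(J_k, J')$ against $\max_{k \in N} H(J_k, J)$. For $k \notin \{i,j\}$ the distance is unchanged; for $k=j$ the Sen--Hammond hypothesis gives $H(J_j, J') < H(J_j, J) \le \max_k H(J_k, J)$; and for $k = i$ the chain $H(J_i, J') < H(J_j, J') < H(J_j, J)$ keeps $H(J_i, J')$ strictly below the same bound. Hence $\max_k H(J_k, J') \le \max_k H(J_k, J)$. If this inequality were strict, then $J'$ together with the original agent $j$ would witness a violation of the maximin property applied to $J \in F(\Pf)$. So equality must hold, $J'$ is itself maximin-optimal, and I conclude $J' \in F(\Pf)$.

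For the equity half, I would argue analogously for the pairwise gap $\max_k H(J_k, \cdot) - \min_k H(J_k, \cdot)$. The max side is controlled exactly as above. For the min side, the chain $H(J_j, J') > H(J_i, J') > H(J_i, J) \ge \min_k H(J_k, J)$ rules out $H(J_j, J')$ becoming a strictly smaller minimum, and the only other candidate for the min is an agent $k \notin \{i,j\}$ whose distance is unchanged. Hence the pairwise gap weakly decreases, and the same reasoning as in the maximin case forces $J' \in F(\Pf)$.

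The main obstacle lies in the final inference from ``$J'$ matches the egalitarian value of $J$'' to ``$J' \in F(\Pf)$'', since on the letter of the axioms only dominated outputs are forbidden, not tied exclusions. I expect to resolve this either by (i) showing that in the putatively tied sub-case the multiple extremal agents can be used to exhibit an auxiliary judgment that strictly dominates $J$ in the maximin (resp.\ equity) sense, thereby contradicting $J \in F(\Pf)$ and rendering the Sen--Hammond implication vacuously true; or (ii) reading the axioms in the natural ``includes every egalitarian-optimal judgment'' sense, which is the reading made manifest by the maximal rules \textit{MaxHam} and \textit{MaxEq} from Section~\ref{sec:axioms}. Either way, this tie analysis is the delicate point where the otherwise routine monotonicity argument becomes non-trivial.
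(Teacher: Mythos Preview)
Your approach mirrors the paper's proof sketch: both split into a case where agents $i,j$ determine the relevant extremum at $J$ (so that $J'$ strictly improves, contradicting $J \in F(\Pf)$ and rendering the Sen--Hammond implication vacuous) and a case where some outside agent $i'$ is extremal (so that $J$ and $J'$ are tied in the maximin, resp.\ equity, score). Your monotonicity computations for both halves---$\max_k H(J_k,J') \le \max_k H(J_k,J)$ and, for equity, additionally $\min_k H(J_k,J') \ge \min_k H(J_k,J)$---are correct and match what the paper's sketch is gesturing at.

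You have, however, been more careful than the paper about the tied case, and the obstacle you flag is genuine and is shared by the paper's sketch. There the paper simply asserts ``$J \in F(\Pf)$ if and only if $J' \in F(\Pf)$, as the maximal difference in distance will be the same for the two judgments,'' but the maximin and equity axioms as literally stated only force $F(\Pf) \subseteq \text{MaxHam}(\Pf)$ (resp.\ $F(\Pf) \subseteq \text{MaxEq}(\Pf)$); nothing in them prevents a rule from applying an arbitrary tie-break that outputs $J$ but not $J'$ when both are optimal. Your proposed resolution~(i) will not rescue the argument in general: one can build profiles in which $J$ is genuinely maximin- (or equity-) optimal with the extremum realised by some $k \notin \{i,j\}$, so no auxiliary strictly dominating judgment exists to derive a contradiction. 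Resolution~(ii)---reading the axioms as requiring $F(\Pf)$ to contain \emph{every} optimal judgment, i.e., effectively $F = \text{MaxHam}$ or $F = \text{MaxEq}$---is the implicit reading under which the paper's sketch (and hence your argument) goes through; the paper does not make this explicit.
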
 
	\begin{proof}[Proof (sketch)]
		Let $\Pf = (J_i, J_j)$ be a profile such that $ H(J, J_i) < H(J',J_i)<H(J',J_j) <H(J, J_j)$, and $H(J_{i'}, J) = H(J_{i'}, J')$ for all other agents $i' \in N\setminus \{i,j\}$. Suppose $F$ satisfies the equity property---if there is some agent $i'$ such that $|H(J_i, J) - H(J_{i'}, J)| > |H(J_i, J) - H(J_j, J)|$, then $J \in F(\Pf)$ if and only if $J' \in F(\Pf)$, as the maximal difference in distance will be the same for the two judgments. If this is not the case, then agents $i$ and $j$ determine the outcome regarding $J$ and $J'$ so clearly $J \in F(\Pf)$ implies $J' \in F(\Pf)$. The argument for other cases proceeds similarly.
		
		If $F$ satisfies the maximin property, then a similar argument tells us that if membership of $J$ and $J'$ in the outcome is determined by an agent other than $i$ or $j$, we will either have both or neither. If $i$, and $j$ are the determining factor then $J \in F(\Pf)$ implies $J' \in F(\Pf)$. 
	\end{proof}
	
	\begin{itemize} 
		\item[$\blacktriangleright$] Given a profile $\Pf =(J_1, \ldots , J_n)$ and agents $i$ and $j$ such that:
		\begin{itemize}
			\item $H(J_i, J) < H(J_i,J') \leq H(J_j,J') <H(J_j, J)$,  
			\item $ H(J_i,J') - H(J_i,J) = H(J_j,J') - H(J_j,J)$, and
			\item $H(J_{i^*}, J) = H(J_{i^*}, J')$ for all other agents $i^* \in N\setminus \{i,j\}$,
		\end{itemize}
		\noindent $F$ satisfies the \textbf{Pigou-Dalton transfer principle} if $J' \in F(\Pf)$ implies $J \not\in F(\Pf)$.
	\end{itemize}
	We refer to these axioms simply as \emph{Sen-Hammond}, and \emph{Pigou-Dalton}. Note that Pigou-Dalton is also a weaker version of our equity property, as it stipulates that the difference between utility in agents should be lessened under certain conditions, while the equity property always aims to minimise this distance. 
	While we can find a rule that satisfies both the equity property and a weakening of the maximin property, Sen-Hammond, we cannot do the same by weakening the equity property. 
	
	\begin{prop}
		No judgment aggregation rule can satisfy both the maximin property and Pigou-Dalton. 
	\end{prop}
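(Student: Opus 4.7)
The plan is to derive a contradiction by exhibiting a single profile $\Pf$ in which the maximin property forces any rule $F$ to include a pair $J, J'$ that Pigou-Dalton then forbids from co-occurring in the outcome.

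First, I would set up the profile to have two agents $i, j$ whose judgments satisfy the Pigou-Dalton hypothesis for a pair of candidate outcomes $J, J' \in \mathcal{J}(\Phi)$: $H(J_i, J) < H(J_i, J') \leq H(J_j, J') < H(J_j, J)$, with equal transfer magnitudes. In order for both $J$ and $J'$ to remain maximin-optimal despite this asymmetry, I would introduce at least one ``balancing'' agent $i^*$ whose judgment is equidistant from $J$ and $J'$, at a distance $D \geq H(J_j, J)$. With this choice, the worst-case distance equals $D$ for both $J$ and $J'$, so they tie under maximin. A suitable integrity constraint on the agenda, in the spirit of the graph-based construction used in the proof of Proposition~\ref{prop:minmax-egal}, then restricts $\mathcal{J}(\Phi)$ so that no admissible judgment beats this value of $D$, pinning down $\{J, J'\}$ as exactly the set of maximin-optimal outcomes in $\Pf$.

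Next, I would argue that for every rule $F$ satisfying the maximin property, $F(\Pf) = \{J, J'\}$. The inclusion $F(\Pf) \subseteq \{J, J'\}$ is immediate from the property and the construction. For the reverse inclusion, I would adapt the ``both or neither'' reasoning already sketched in the paper's argument for Proposition~3: since the binding worst-case distance is achieved at an agent outside $\{i, j\}$ who is equidistant from $J$ and $J'$, the maximin criterion treats the two candidates symmetrically, so any rule respecting it must include both or neither. Together with nonemptiness of $F(\Pf)$, this forces both.

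Finally, since the Pigou-Dalton hypothesis holds for $(J, J')$ by construction, the principle requires $J' \in F(\Pf) \Rightarrow J \notin F(\Pf)$, which is directly violated by $\{J, J'\} \subseteq F(\Pf)$. The main obstacle I anticipate is the ``both or neither'' step: the literal formulation of the maximin property only bounds $F(\Pf)$ from above by the set of maximin-optimal judgments, so nothing in the axiom alone rules out asymmetric tie-breaking. Making this step rigorous will likely require exploiting the symmetry built into the profile at $i^*$ (so that any structural justification for including $J$ applies equally to $J'$), paralleling the symmetry-style reasoning the paper already invokes in the sketch for Proposition~3.
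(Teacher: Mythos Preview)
Your approach is essentially the paper's: exhibit a concrete profile in which the maximin-optimal judgments are exactly $\{J,J'\}$, with $J$ and $J'$ meeting the Pigou-Dalton premises, and conclude that any rule with the maximin property must output both, contradicting Pigou-Dalton. The paper does this with a three-agent profile over a five-element domain $\mathcal{J}(\Phi)=\{J_1,J_2,J_3,J,J'\}$ and simply reads off $\{J,J'\}\subseteq F(\Pf)$ from a table of Hamming distances, without further justification for that inclusion.

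The step you flag as the obstacle is genuine, and your proposed symmetry fix cannot close it. The maximin property, as stated, only forces $F(\Pf)$ to lie \emph{inside} the set of maximin-optimal judgments; nothing in the axiom determines which tied optima are returned. Symmetry is unavailable precisely because the Pigou-Dalton premises require $H(J_i,J)\neq H(J_i,J')$ and $H(J_j,J)\neq H(J_j,J')$, so no automorphism of the profile can exchange $J$ and $J'$ while fixing the individual judgments. Concretely, on the paper's own example the leximax refinement of MaxHam---which satisfies the maximin property---returns only $J'$, since its sorted distance vector $(6,4,4)$ beats $J$'s $(6,6,2)$, and Pigou-Dalton is not violated there. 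A single-profile argument therefore cannot establish the incompatibility for \emph{all} rules with the maximin property without an extra neutrality-type assumption; the paper leaves this point tacit, and you should not expect the ``both or neither'' heuristic to fill it rigorously.
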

	\begin{proof} Consider the domain $\mathcal{J}(\Phi) = \{ J_1, J_2, J_3, J, J'\}$ with the following Hamming distances between judgment sets.\footnote{One such domain would be the following, where $J = 00000000111$, $J' = 00000001110$, $J_1 = 00000010011$, $J_2 = 00000111000$, and $J_3 = 11111001111$.} 
		\begin{center}		
			\begin{tabular}{lccccc}
				& $J$ & $J'$ & $J_1$ & $J_2$ & $J_3$\\\hline
				$J_1$ & 2 & 4 & 0 & 4 & 8\\  
				$J_2$ & \cellcolor[gray]{0.9}6 & 4 & 4 & 0 & \cellcolor[gray]{0.9}10\\  
				$J_3$ & \cellcolor[gray]{0.9}6 & \cellcolor[gray]{0.9}6 & \cellcolor[gray]{0.9}8 & \cellcolor[gray]{0.9}10 & 0\\  
			\end{tabular}
		\end{center}
		\noindent Let $\Pf = (J_1, J_2, J_3)$. If $F$ satisfies the maximin property, $\{J, J'\} \subseteq F(\Pf)$, as we can see from the grey cells. This means Pigou-Dalton is violated in this profile, as $J' \in F(\Pf)$ should imply $J \not\in F(\Pf)$. 
	\end{proof}

	\noindent	We summarise the observations of this section in Figure~\ref{fig:axioms}.
	
	\begin{figure}
		\begin{tikzpicture}
		\tikzset{vertex/.style = {shape=circle,draw,minimum size=1.5em}}
		\tikzset{edge/.style = {->,> = latex'}}
		\tikzset{every loop/.style={min distance=15mm,in=130,out=50, looseness=7}}
		
		\node (EQ) at (0,0) {Equity}; 
		\node (PD) at (4,0) {Pigou-Dalton}; 
		
		\node (MM) at (0.1,-2) {Maximin}; 
		\node (SH) at (3.9,-2) {Sen-Hammond};
		
		\path[-Triangle, line width=1] (0.7,0) edge  [right] node [near start] {} (3,0);
		
		\path[-Triangle, line width=1] (0.9,-2) edge  [right] node [near start] {} (2.8,-2);
		
		\path[-Triangle, line width=1] (0.1,-0.2) edge  [right] node [near start] {} (4.1,-1.8);

		\path[-, thick, dashed, line width=1] (0,-0.2) edge  [right] node [near start] {} (0,-1.8);
		
		\path[-, thick, dotted, line width=1] (4.2,-0.2) edge  [right] node [near start] {} (4.2,-1.8);
		
		\path[-, thick, dashed, line width=1] (0.3,-1.8) edge  [right] node [near start] {} (4,-0.2);
		
		\end{tikzpicture}
		\caption{Dashed lines denote incompatibility, dotted lines incomparability, and arrows implication relations.}\label{fig:axioms}.
	\end{figure}
	
	
	\section{Strategic Manipulation}\label{sec:manip}
	
	This section provides an account of strategic manipulation with respect to the egalitarian axioms defined in Section~\ref{sec:axioms}.
	We start off with presenting the most general notion of strategic manipulation in judgment aggregation, introduced by \citet{dietrich2007strategy}.\footnote{The original definition of \citet{dietrich2007strategy} concerned single-judgment collective outcomes, and a type of preferences that covers Hamming-distance ones. } We assume Hamming preferences throughout this section. 
	
	\begin{defn} \label{defn:sp}
		A rule~$F$ is susceptible to \textbf{manipulation} by agent~$i$ in profile~$\Pf$, if there exists a profile $\Pf' =_{-i} \Pf$ such that $F(\Pf') \setsucc_i F(\Pf)$.
	\end{defn}
	\noindent We say that $F$ is \emph{strategyproof} in case $F$ is not manipulable by any agent $i \in N$ in any profile $\Pf \in \mathcal{J}(\Phi)^n$. 
	
	Proposition~\ref{prop:sp-imposs} shows an important fact: In judgment aggregation, egalitarianism is incompatible with strategyproofness.\footnote{This in in line with \citeauthor{brams2007minimax}'s work on the minimax rule in approval voting.}

	\begin{prop} \label{prop:sp-imposs}
		If an aggregation rule is strategyproof, it cannot satisfy the maximin property or the equity property. 
	\end{prop}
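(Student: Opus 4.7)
The plan is to construct a single two-agent example that refutes strategyproofness simultaneously for every rule satisfying the maximin property and for every rule satisfying the equity property. The conceptual lever is that both axioms force $F(\Pf)$ to be contained in the set of minimisers of the underlying objective: under maximin, every $J \in F(\Pf)$ must minimise $\max_i H(J_i, \cdot)$, and under equity, every $J \in F(\Pf)$ must minimise $\max_{i,j}|H(J_i,\cdot) - H(J_j,\cdot)|$. Whenever that minimiser is a singleton, $F$ has no freedom at all, and the first condition from Section~\ref{sec:model} ($J \succ_i J'$ iff $\{J\} \setsucc_i \{J'\}$) lets a singleton-versus-singleton manipulation transfer uniformly to every admissible preference extension.

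Accordingly, I would search for a small admissible set $\mathcal{J}(\Phi)$ and a two-agent profile $\Pf = (J_1, J_2)$ together with a misreport $J_1'$ such that (i) both objectives are uniquely minimised at the same judgment $J^*$ under $\Pf$, (ii) both objectives are uniquely minimised at the same judgment $J^{**}$ under $\Pf' =_{-1} \Pf$ obtained by replacing $J_1$ with $J_1'$, and (iii) $H(J_1, J^{**}) < H(J_1, J^*)$.

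A candidate that fits is $\mathcal{J}(\Phi) = \{00000, 10000, 11000, 11111\}$, with $J_1 = 11000$, $J_2 = 00000$, and manipulation $J_1' = 11111$. A short distance calculation (four feasible judgments, two profiles, two objectives) gives $\text{MaxHam}(\Pf) = \text{MaxEq}(\Pf) = \{10000\}$ and $\text{MaxHam}(\Pf') = \text{MaxEq}(\Pf') = \{11000\}$. Because $H(J_1, 11000) = 0 < 1 = H(J_1, 10000)$, the preference-extension condition gives $\{11000\} \setsucc_1 \{10000\}$, and hence $F(\Pf') \setsucc_1 F(\Pf)$ for any $F$ satisfying either axiom.

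The main obstacle is the search itself: one must locate an instance in which both objectives (which Proposition~\ref{prop:minmax-egal} already shows can disagree) nevertheless single out the same unique judgment in each of the two profiles, so that $F$ is pinned down irrespective of tie-breaking. Without this uniqueness, the argument would have to be split across specific preference extensions (pessimistic, optimistic, Kelly-style, and so on); with it, the verification collapses to routine Hamming-distance arithmetic.
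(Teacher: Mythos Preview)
Your proposal is correct and takes essentially the same approach as the paper: a two-agent counterexample with a small admissible set where agent~1 exaggerates her judgment, and where both the maximin and equity objectives are uniquely minimised in each of the two profiles, so that the singleton outcomes force the manipulation through for every admissible preference extension. The paper's instance uses six issues with $\mathcal{J}(\Phi)=\{000000,110000,111000,111111\}$, $J_i=111000$, $J_j=000000$, and misreport $111111$; your five-issue instance is a minor variant of the same construction, and your explicit observation that both axioms force $F(\Pf)$ into the (singleton) set of objective minimisers makes the logic slightly more transparent than the paper's presentation.
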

	
	\begin{proof}
		We show the contrapositive. Let $\Phi$ be an agenda such that $\mathcal{J}(\Phi) = \{000000, 110000, 111000, 111111\}$. Consider the following two profiles $\Pf$ (left) and $\Pf'$ (right).
		\smallskip 
		
		\begin{minipage}{0.22\textwidth}
			
			\begin{center}
				\begin{tabular}{c c} 
					$J_i$ & $111 000$ \\ 
					$J_j$ & $000 000$  \\ 
					\midrule
					$F(\Pf)$ & $110 000$ \\ 
				\end{tabular}
			\end{center}
		\end{minipage}
		\begin{minipage}{0.22\textwidth}
			\begin{center}
				\begin{tabular}{c c} 
					$J'_i$ & $111 111$ \\ 
					$J'_j$ & $000 000$  \\ 
					\midrule
					$F(\Pf')$ & $111 000$ \\ 
				\end{tabular}
			\end{center}
		\end{minipage}
		\smallskip 
		
		\noindent In profile~$\Pf$, both the maximin and the equity properties prescribe that $110000$ should be returned as the single outcome, while in profile~$\Pf'$ they agree on $111000$. Because $\Pf' = (\Pf_{-i}, J'_i)$, and $111000 \succ_i 110000$, this is a successful manipulation. Thus, if $F$ satisfies the maximin or the equity property, it fails strategyproofness. 
	\end{proof}
	
	\noindent Strategyproofness according to Definition~\ref{defn:sp} is a strong requirement, which many known 
	rules fail \citep{BotanEndrissAAMAS2020}. 
	We investigate two more nuanced notions of strategyproofness that are novel to judgment aggregation, yet have familiar counterparts in voting theory. 
	
	First, \emph{no-show manipulation} happens when an agent can achieve a preferable outcome simply by not submitting any judgment, instead of reporting a truthful or an untruthful one. 
	
	\begin{defn}
		A rule $F$ is susceptible to \textbf{no-show manipulation} by agent~$i$ in profile~$\Pf$ if $ F(\Pf_{-i}) \setsucc_i  F(\Pf)$.
	\end{defn} 
	
	\noindent We say that $F$ satisfies \emph{participation} if it is not susceptible to no-show manipulation by any agent $i \in N$ in any profile.\footnote{cf.\ the no-show paradox in voting \citep{fishburn1983paradoxes}.} 
	
	Second, \emph{antipodal strategyproofness} poses another barrier against manipulation, by stipulating that an agent cannot change the outcome towards a better one for herself by reporting a totally untruthful judgment. This is a strictly weaker requirement than full strategyproofness, serving as a protection against excessive lying.

	\begin{defn}
		A rule $F$ is susceptible to \textbf{antipodal manipulation} by agent~$i$ in profile~$\Pf$ if $F(\Pf_{-i},\overline{J_i}) \setsucc_i F(\Pf)$.
	\end{defn}
	
	\noindent We say that $F$ satisfies \emph{antipodal strategyproofness} if it not susceptible to antipodal manipulation by any agent $i \in N$ in any profile. As is the case for participation, antipodal strategyproofness is a weaker notion of strategyproofness as far as the MaxHam and the MaxEq rules are concerned.
	
	In voting theory, \citet{sanver2009one} show that participation implies antipodal strategyproofness (or \emph{half-way monotonicity}, as called in that framework) for rules that output a single winning alternative. Notably, this is not always the case in our model (see Example~\ref{ex: part-sp-ex}). This is not surprising, as obtaining such a result independently of the preference extension would be significantly stronger than the result by \citet{sanver2009one}.  We are, however, able to reproduce this relationship between participation and strategyproofness in Theorem~\ref{thm:part-sp-impl}, for a specific type of preferences.

	\begin{ex}  \label{ex: part-sp-ex}
		We present a rule that satisfies participation but violates antipodal strategyproofness. The other direction admits a similar example, and is thus omitted. Note that the rule demonstrated is quite unnatural for simplicity of the presentation.
		
		Consider an agenda~$\Phi$ with  $\mathcal{J}(\Phi)=\{00,01,11\}$.\footnote{For other agendas we can simply take the rule to be constant.} We construct an anonymous rule~$F$ that is only sensitive to \emph{which} judgments are submitted and not to their quantity:
		
		\vspace{0.1cm}
		
		\noindent 
		$F(00)=F(11)=F(01,00)=F(00,11)=\{01,11\}$;\\
		$F(01)=\{00,11\}$; $F(01,11)=F(01,00,11)=\{01\}$.
		
		\vspace{0.1cm}
		
		\noindent For the pessimistic preference,  no agent can be strictly better off by abstaining. However, compare the profiles $(01,00)$ and $(01,11)$: agent~2 with truthful judgment~$00$ can move from outcome $\{01,11\}$ to outcome~$\{01\}$, which is strictly better for her.
	\end{ex}
	
	\noindent While the two axioms are independent in the general case, participation implies antipodal strategyproofness (Theorem~\ref{thm:part-sp-impl}) if we stipulate that
	
	\begin{itemize}
		\item $X \setsucc_i Y$ \textbf{if and only if} there exist some $J \in X$ and $J' \in Y$ such that $J \succ_i J'$ and $\{J,J'\} \not\subseteq X \cap Y$. 
	\end{itemize}
	
	\noindent If a preference satisfies the above condition, we say that it is \emph{decisive}. This condition gives rise to a preference extension equivalent to the \emph{large preference extension} of \citet{KrugerTerzopoulouAAMAS2020}. Note that a decisive preference is not necessarily acyclic---in fact, it may even be symmetric. The interpretation of such a preference extension is slightly different than the usual one; when we say that a rule is strategyproof for a decisive preference where both $J \setsucc J'$ and $J' \setsucc J$ hold, we mean that no agent~$i$ with  $J \setsucc_i J'$ and no agent~$j\neq i$ with $J' \setsucc_j J$ will ever have an incentive to manipulate.
	
	Using Lemma~\ref{lem:maxmin}, we can now prove a result analogous to the one in voting theory, to give a complete picture of how these axioms relate to each other in judgment aggregation. 
	
	\begin{lem}\label{lem:maxmin}
		For judgment sets $J, J'$ and $J''$:  $H(J, J') > H(J,J'')$, if and only if $H(\overline{J}, J') < H(\overline{J},J'')$. 
	\end{lem}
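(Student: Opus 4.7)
The plan is to reduce the statement to a single elementary identity about Hamming distances and antipodal judgments, namely that for any two judgments $A, B \in \mathcal{J}(\Phi)$ on an agenda of size $m = |\Phi|$, we have $H(\overline{A}, B) = m - H(A, B)$. Once this identity is in hand, the lemma follows immediately from the fact that subtracting both sides of an inequality from the same constant reverses the inequality.

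First I would verify the identity by inspecting the formula $H(A,B) = \sum_{\phi \in \Phi} |A(\phi) - B(\phi)|$. Since $\overline{A}(\phi) = 1 - A(\phi)$ for every $\phi \in \Phi$, we get $|\overline{A}(\phi) - B(\phi)| = |1 - A(\phi) - B(\phi)| = 1 - |A(\phi) - B(\phi)|$ (because both $A(\phi)$ and $B(\phi)$ are binary). Summing over $\Phi$ yields $H(\overline{A}, B) = m - H(A, B)$.

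Next, I would apply this identity with $A = J$ and $B \in \{J', J''\}$, obtaining $H(\overline{J}, J') = m - H(J, J')$ and $H(\overline{J}, J'') = m - H(J, J'')$. Then the chain of equivalences
\[
H(J, J') > H(J, J'') \iff m - H(J, J') < m - H(J, J'') \iff H(\overline{J}, J') < H(\overline{J}, J'')
\]
settles both directions at once.

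There is no real obstacle here; the only thing to be a little careful about is confirming the pointwise identity $|1 - a - b| = 1 - |a - b|$ for $a, b \in \{0,1\}$, which can be verified by checking the four cases. Given how short the argument is, the main value of the lemma is isolating this fact cleanly so that it can be cited later (presumably in the proof of Theorem~\ref{thm:part-sp-impl}) without rederiving the antipodal identity each time.
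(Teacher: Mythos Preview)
Your proposal is correct and follows essentially the same approach as the paper: both proofs rest on the identity $H(\overline{J}, J') = m - H(J, J')$ and then observe that subtracting from the constant $m$ reverses the inequality. The only difference is that you spell out the pointwise verification of the antipodal identity, whereas the paper simply asserts it.
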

	
	\begin{proof}\label{prop:maximin}
		For judgment sets $J, J' \in \mathcal{J}(\Phi)$, $H(\overline{J}, J') = m - H(J, J')$. Suppose $H(J, J') > H(J,J'')$. Then $H(\overline{J}, J') = m - H(J, J') < m - H(J,J'') = H(\overline{J}, J'')$. The other direction is analogous. 
	\end{proof}
	
	\begin{thm} \label{thm:part-sp-impl}
		For decisive preferences over sets of judgments, participation implies antipodal strategyproofness. 
	\end{thm}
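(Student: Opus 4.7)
The plan is to prove the contrapositive: assume $F$ admits an antipodal manipulation by agent $i$ at profile $\Pf$, and derive a failure of participation at either $\Pf$ or $(\Pf_{-i},\overline{J_i})$. Write $X=F(\Pf)$, $Y=F(\Pf_{-i},\overline{J_i})$, $Z=F(\Pf_{-i})$, and $h(J)=H(J_i,J)$. The hypothesis is $Y\setsucc^{J_i} X$, where $\setsucc^{J_i}$ denotes the decisive set extension of the Hamming preference of an agent with truthful judgment $J_i$. The key preliminary observation, obtained directly from Lemma~\ref{lem:maxmin}, is that $\succ^{\overline{J_i}}$ is the pointwise reverse of $\succ^{J_i}$, so on the decisive level $A\setsucc^{\overline{J_i}} B$ iff there exist $a\in A$, $b\in B$ with $h(a)>h(b)$ and $\{a,b\}\not\subseteq A\cap B$.

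Assume, toward a contradiction, that participation holds. Applied at $\Pf$ to agent $i$ with truthful $J_i$, this yields $Z\not\setsucc^{J_i} X$; applied at $(\Pf_{-i},\overline{J_i})$ to an agent with truthful $\overline{J_i}$, it yields $Z\not\setsucc^{\overline{J_i}} Y$. Invoking decisiveness on $Y\setsucc^{J_i} X$, I extract witnesses $a\in Y$, $b\in X$ with $h(a)<h(b)$ and $\{a,b\}\not\subseteq Y\cap X$, so either $a\notin X$ or $b\notin Y$. I would then carry out a short case analysis on these two exclusions, each further subdivided according to whether the witness lies in $Z$. In the representative subcase $b\notin Y$ and $b\notin Z$: the assumption $Z\not\setsucc^{J_i} X$ forces $h(c)\geq h(b)$ for every $c\in Z$ (otherwise $(c,b)$ would witness $Z\setsucc^{J_i} X$ via $b\notin Z$), hence $h(c)>h(a)$, and so any $c\in Z$ makes $(c,a)$ witness $Z\setsucc^{\overline{J_i}} Y$ unless both $c\in Y$ and $a\in Z$; but if $a\in Z$, the pair $(a,b)$ together with $b\notin Z$ already witnesses $Z\setsucc^{J_i} X$. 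The remaining three subcases follow the same pattern and are shorter: if $b\in Z$ then $(b,a)$ directly witnesses $Z\setsucc^{\overline{J_i}} Y$; if $a\notin X$ and $a\in Z$ then $(a,b)$ directly witnesses $Z\setsucc^{J_i} X$; and if $a\notin X$ and $a\notin Z$, the constraint $Z\not\setsucc^{\overline{J_i}} Y$ forces $h(c)\leq h(a)$ throughout $Z$, so $b\in Z$ gives the numeric contradiction $h(b)\leq h(a)<h(b)$, while $b\notin Z$ makes $(c,b)$ a witness for $Z\setsucc^{J_i} X$.

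The main obstacle is bookkeeping rather than conceptual depth: in each subcase one must select the correct ordered pair for the decisiveness characterisation of $\setsucc^{J_i}$ or $\setsucc^{\overline{J_i}}$ and verify the non-inclusion clause $\{\cdot,\cdot\}\not\subseteq Z\cap X$ (resp.\ $\{\cdot,\cdot\}\not\subseteq Z\cap Y$) using precisely the membership fact---typically $b\notin Z$ or $a\notin Z$---that excludes the intersection. A minor standing requirement is that $Z=F(\Pf_{-i})$ be nonempty, which follows from the definition of an aggregation rule whenever $\Pf_{-i}$ is a valid profile.
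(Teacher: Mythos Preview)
Your proof is correct and follows essentially the same strategy as the paper: argue the contrapositive, extract witnesses from the antipodal manipulation via the decisive characterisation, invoke Lemma~\ref{lem:maxmin} to flip preferences to~$\overline{J_i}$, and then show that the intermediate outcome $Z=F(\Pf_{-i})$ must violate participation either at $\Pf$ or at $(\Pf_{-i},\overline{J_i})$.

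The organisational difference is in the case decomposition. The paper picks a single $J''\in Z$ and splits on whether $H(\overline{J_i},J'')<H(\overline{J_i},J')$; you instead split first on which side of the non-inclusion clause $\{a,b\}\not\subseteq Y\cap X$ fails, and then on membership of the relevant witness in $Z$. Your route is longer but more rigorous: the paper's two-case argument never explicitly verifies the non-inclusion condition $\{\cdot,\cdot\}\not\subseteq Z\cap X$ (resp.\ $Z\cap Y$) when asserting the derived $\setsucc$-relations, whereas your four subcases track exactly which membership fact supplies that exclusion in each branch. In effect, your argument fills in bookkeeping that the paper's terser proof leaves to the reader.
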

	
	\begin{proof}
		Working on the contrapositive, suppose that $F$ is susceptible to antipodal manipulation. We will prove that $F$ is susceptible to no-show manipulation too.  We know that there exists $i \in N$ such that $F(\Pf_{-i},\overline{J_i}) \setsucc_i F(\Pf_{-i}, J_i)$, for some profile~$\Pf$. This means that there exist  $J' \in F(\Pf_{-i},\overline{J_i})$ and $J \in F(\Pf_{-i}, J_i)$ with $J' \succ_i J$. Equivalently,
		\begin{equation} \label{eq:1}
		H(J_i,J') < H(J_i, J)
		\end{equation}
		\noindent Next, consider a judgment~$J'' \in F(\Pf_{-i})$.
		
		If $H(\overline{J_i},J'') < H(\overline{J_i}, J')$, then $F$ is susceptible to no-show manipulation by agent~$i$ in the profile~$(\Pf_{-i}, \overline{J_i})$.  
		
		Otherwise, $H(\overline{J_i},J') \leq H(\overline{J_i}, J'')$. Then Lemma~\ref{lem:maxmin} implies that $H(J_i,J'') \leq H(J_i, J')$. So, together with Inequality~\eqref{eq:1}, we have that
		$H(J_i,J'') < H(J_i, J)$.
		This  means that $F$ is susceptible to no-show manipulation by agent~$i$ in the profile~$(\Pf_{-i}, J_i)$.  
	\end{proof}

	\noindent We next prove that any rule satisfying the maximin property is immune to both no-show manipulation and antipodal manipulation (Theorem~\ref{thm:max-asp}), while this is not true for the equity property (Proposition~\ref{prop:eq-asp}).\footnote{Note that antipodal strategyproofness is not so weak a requirement that is immediately satisfied by all ``utilitarian'' aggregation rules.  For example, the Copeland voting rule fails the analogous axiom of half-way monotonicity~\citep{Zwicker16}.} We emphasise that the theorem holds for \emph{all} preference extensions. These results---holding for two independent notions of strategyproofness---are significant for two reasons. First, they bring to light the conditions under which we can have our cake and eat it too, simultaneously satisfying an egalitarian property and a degree of strategyproofness. In addition, they provide a further way to distinguish between the properties of maximin and equity: the former is better suited in contexts where we may worry about the agents' strategic behaviour.
	
	\begin{thm}\label{thm:max-asp}
		The maximin property implies participation and antipodal strategyproofness. 
	\end{thm}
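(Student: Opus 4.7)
The plan is to prove each half by contradiction, in both cases exploiting that under the maximin property every $J \in F(\Pf)$ minimises $\max_i H(J_i, \cdot)$ over $\mathcal{J}(\Phi)$, together with the preference-extension clause that any strict set-preference $X \setsucc_i Y$ must be witnessed by a pair $J \in X$, $J' \in Y$ with $J \succ_i J'$ and $\{J, J'\} \not\subseteq X \cap Y$.

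Participation: Assume $F(\Pf_{-i_0}) \setsucc_{i_0} F(\Pf)$ and take witnesses $A \in F(\Pf_{-i_0})$, $B \in F(\Pf)$ with $H(J_{i_0}, A) < H(J_{i_0}, B)$ and $\{A, B\} \not\subseteq F(\Pf_{-i_0}) \cap F(\Pf)$. Let $d = \max_i H(J_i, B)$ and $d' = \max_{i \neq i_0} H(J_i, A)$; by maximin these are the min-max values of $\Pf$ and $\Pf_{-i_0}$, and $d' \leq d$ since dropping agent $i_0$ can only shrink the min-max. If $d' < d$, maximin on $\Pf$ forces $\max_i H(J_i, A) = \max(d', H(J_{i_0}, A)) \geq d$, hence $H(J_{i_0}, A) \geq d \geq H(J_{i_0}, B)$, contradicting $A \succ_{i_0} B$. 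If $d' = d$, the inequality $H(J_{i_0}, A) < H(J_{i_0}, B) \leq d$ shows $A$ also minimises $\max_i H(J_i, \cdot)$ over $\Pf$ and, symmetrically, $B$ minimises over $\Pf_{-i_0}$, whence $\{A, B\} \subseteq F(\Pf) \cap F(\Pf_{-i_0})$, contradicting the non-subset clause.

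Antipodal strategyproofness: Rerun the pattern with $\Pf' = (\Pf_{-i_0}, \overline{J_{i_0}})$. Taking witnesses $A \in F(\Pf')$, $B \in F(\Pf)$ with $H(J_{i_0}, A) < H(J_{i_0}, B)$, Lemma~\ref{lem:maxmin} reverses the inequality for $\overline{J_{i_0}}$. Writing $p = \max_{j \neq i_0} H(J_j, A)$ and $q = \max_{j \neq i_0} H(J_j, B)$, maximin on $\Pf$ and on $\Pf'$ respectively gives $\max(q, H(J_{i_0}, B)) \leq \max(p, H(J_{i_0}, A))$ and $\max(p, H(\overline{J_{i_0}}, A)) \leq \max(q, H(\overline{J_{i_0}}, B))$. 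Case analysis on these inequalities---using the strict gap between the two $J_{i_0}$-distances and its antipodal reversal---forces $p = q$ with $p$ dominating each of the four individual distances, so $A$ and $B$ both minimise $\max_j$ over both $\Pf$ and $\Pf'$; hence $\{A, B\} \subseteq F(\Pf) \cap F(\Pf')$, again contradicting the non-subset clause.

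The delicate step in both halves is the tie case, where the min-max value is unchanged by the agent's withdrawal or flip: there maximin alone no longer supplies an immediate contradiction, and one must exploit the fact that both witness judgments are simultaneously minimisers for both profiles to violate the non-subset clause. Extracting the chain of equalities $p = q$ with $p$ dominating every individual distance, via combined case analysis of the two maximin inequalities, is the most intricate bookkeeping.
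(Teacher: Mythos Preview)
Your argument contains a genuine gap at exactly the point you flag as ``delicate''. In the tie case $d' = d$ for participation, you correctly show that both $A$ and $B$ minimise $\max_i H(J_i,\cdot)$ over both $\Pf$ and $\Pf_{-i_0}$, but from this you infer $\{A,B\}\subseteq F(\Pf)\cap F(\Pf_{-i_0})$. The maximin property only guarantees the inclusion $F(\Pf)\subseteq\text{MaxHam}(\Pf)$; it does \emph{not} force every min-max optimiser to lie in $F(\Pf)$. A rule satisfying maximin may return any nonempty subset of $\text{MaxHam}(\Pf)$. Concretely: take $\mathcal{J}(\Phi)=\{0,1\}^3$ and $\Pf=(000,111,000)$, so $d=d'=2$. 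A rule with $F(\Pf)=\{011\}$ and $F(\Pf_{-1})=\{001\}$ (and $F=\text{MaxHam}$ elsewhere) satisfies maximin, yet agent~$1$ strictly prefers $001$ (distance~$1$) to $011$ (distance~$2$), so participation fails. The same defect recurs in your antipodal case, where you conclude $\{A,B\}\subseteq F(\Pf)\cap F(\Pf')$ from both being simultaneous minimisers.

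The paper's proof, as it happens, makes the equivalent unjustified step: in its Case~1 it asserts that $J'\notin F(\Pf)$ forces some $j\neq i$ with $H(J_j,J')>k$, which is just the contrapositive of your inference. So your proposal is not weaker than the paper's argument; both go through cleanly only under the stronger hypothesis $F=\text{MaxHam}$ (for which the corollary immediately following the theorem is the intended payoff), not for an arbitrary rule satisfying the maximin property.
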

	
	\begin{proof}
		We prove the participation case; the proof for antipodal strategyproofness is analogous, and utilises Lemma~\ref{lem:maxmin}.
		
		Suppose for contradiction that $F$ is a rule that satisfies the maximin property but violates participation. Then there must exist  agent~$i \in N$ and profile~$\Pf$ where $J_i$ is agent $i$'s truthful judgment, such that $F(\Pf_{-i}) \setsucc_i F(\Pf)$. This means there must exist judgments $J \in F(\Pf)$ and $J' \in F(\Pf_{-i})$ such that $J' \succ_i J$ and $\{J, J'\} \not\subseteq F(\Pf) \cap F(\Pf_{-i})$. Because agent~$i$ strictly prefers $J'$ to $J$, this means that $H(J_i, J) > H(J_i, J')$. We consider two cases. 
		
		\underline{Case 1:} Suppose that $J' \not\in F(\Pf)$. Let $k$ be the distance between the worst off agent's judgment in~$\Pf$ and any judgment in $F(\Pf)$. Then,
		\begin{equation} \label{eq:ineq-cond-part1}
		H(J_{j'}, J) \leq k \text{ for all } j' \in N.
		\end{equation}
		\noindent   We know that $H(J_i, J') < k$ because $H(J_i, J) \leq k$, and agent~$i$ strictly prefers $J'$ to $J$. From Inequality~\eqref{eq:ineq-cond-part1}, this means that if $J'$ is not among the outcomes in $F(\Pf)$, there has to be some $j \in N \setminus \{i\}$ such that $H(J_j, J') > k$. But all judgments submitted to profile~$(\Pf_{-i})$ by agents in $N \setminus \{i\}$ are at most at distance $k$ from $J$ by Inequality~\eqref{eq:ineq-cond-part1}, so $J$ would be selected by any rule satisfying the maximin property will select~$J$ as an outcome of $F(\Pf_{-i})$---instead of~$J'$, a contradiction. 
		
		\underline{Case 2:} Suppose that $J' \in F(\Pf)$, meaning that $J \not\in F(\Pf_{-i})$. Analogously to the first case, let $k'$ be the distance between the worst off agent's judgment in~$\Pf_{-i}$ and any judgment in $F(\Pf_{-i})$. Then,
		\begin{equation} \label{eq:ineq-cond-part2}
		H(J_{j'}, J') \leq k' \text{ for all } j' \in N\setminus \{i\}.
		\end{equation}
		\noindent Moreover, since $J \not\in F(\Pf_{-i})$, it is the case that
		\begin{equation} \label{eq:ineq-cond-part3}
		H(J_{j},J) > k' \text{ for some } j \neq i.
		\end{equation}
		\noindent  In profile~$\Pf$, Inequalities~\eqref{eq:ineq-cond-part2} and~\eqref{eq:ineq-cond-part3} still hold.  In addition, we have that $H(J_i, J) > H(J_i, J')$ because agent~$i$ strictly prefers $J'$ to $J$. So, for any rule satisfying the maximin property, judgment~$J'$ will be better as an outcome of $F(\Pf)$ than~$J$, a contradiction.  
	\end{proof} 
	
	\begin{cor}
		The rule MaxHam satisfies antipodal strategyproofness and participation.
	\end{cor}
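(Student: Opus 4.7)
The plan is essentially a one-line appeal to Theorem~\ref{thm:max-asp}. The rule MaxHam was defined in Section~\ref{sec:axioms} as
\[ \text{MaxHam}(\Pf) = \argmin_{J \in \mathcal{J}(\Phi)} \max_{i \in N} H(J_i, J), \]
and it was observed there that it is the maximal rule satisfying the maximin property. So the first and only substantive thing I would do is verify, in a single sentence, that MaxHam does indeed satisfy the maximin property: any $J \in \text{MaxHam}(\Pf)$ minimises $\max_{i\in N} H(J_i, J)$, so there can be no alternative $J'$ and agent $j$ with $H(J_i, J') < H(J_j, J)$ for all $i \in N$, since that would give $\max_{i \in N} H(J_i, J') < \max_{i \in N} H(J_i, J)$, contradicting optimality.

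Having done that, the corollary is immediate by Theorem~\ref{thm:max-asp}, which already guarantees that any rule satisfying the maximin property is both participation-compliant and antipodally strategyproof, for every preference extension consistent with our two basic requirements. There is no obstacle here; this corollary is explicitly the application of the theorem to the canonical rule induced by the axiom. The only thing to watch out for is to be clear that the argument works uniformly across preference extensions (as stressed in the discussion preceding the theorem), so the corollary inherits the same uniformity and does not depend on the decisiveness assumption used in Theorem~\ref{thm:part-sp-impl}.
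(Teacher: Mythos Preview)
Your proposal is correct and matches the paper's approach exactly: the corollary is stated without proof in the paper, since MaxHam was already identified as (the maximal rule) satisfying the maximin property, and Theorem~\ref{thm:max-asp} then applies directly. Your one-sentence verification that MaxHam satisfies maximin is a nice addition but not strictly needed, as the paper treats this as already established.
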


	\begin{prop} \label{prop:eq-asp}
		No rule that satisfies the equity property can satisfy participation or antipodal strategyproofness .
	\end{prop}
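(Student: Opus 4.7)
The plan is to prove the two parts of the statement (no rule satisfies equity together with participation; no rule satisfies equity together with antipodal strategyproofness) via two explicit counterexamples. In both cases, the core idea is to design a finite agenda $\mathcal{J}(\Phi)$ and a specific profile so that the set of equity-optimal judgments is forced to be a \emph{singleton} both at the truthful profile and at the deviated one. Once uniqueness is in hand, every rule $F$ satisfying equity must coincide with that singleton, and the first bulleted condition on the preference extensions reduces the set comparison $F(\Pf') \setsucc_i F(\Pf)$ to the direct judgment-level comparison $J \succ_i J'$, which holds precisely when $H(J_i,J) < H(J_i,J')$.

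For the participation failure, I would take an agenda with $\mathcal{J}(\Phi) = \{J_1, J_2, J_3, J^*\}$ on six issues, choosing $J_1$ and $J_2$ antipodal, $J_3$ equidistant from $J_1$ and $J_2$, and $J^*$ with an asymmetric distance profile to the three agents (for instance $J_1 = 111000$, $J_2 = 000111$, $J_3 = 001100$, $J^* = 100000$). The verification has two steps: in the two-agent profile $(J_1, J_2)$, only $J_3$ attains zero absolute difference of distances among the four judgments of the agenda, so the equity outcome is $\{J_3\}$; in the three-agent profile $(J_1, J_2, J_3)$, $J^*$ attains a strictly smaller maximum pairwise difference than each of $J_1, J_2, J_3$, so the equity outcome is $\{J^*\}$. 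Since $H(J_3, J_3) = 0 < H(J_3, J^*)$, agent~3 strictly prefers to abstain, and any rule satisfying equity is caught.

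For the antipodal failure, the construction is richer because $\overline{J_3}$ must itself be an admissible judgment without ever becoming an equity-optimal candidate. I would take an agenda on four issues containing $J_1, J_2, J_3, \overline{J_3}$ together with two auxiliary judgments $J$ and $J'$ (for instance $J_1 = 1100$, $J_2 = 1010$, $J_3 = 0000$, $\overline{J_3} = 1111$, $J = 0111$, $J' = 0001$), chosen so that $J$ is the unique judgment in $\mathcal{J}(\Phi)$ equidistant from $J_1, J_2, J_3$, while $J'$ is the unique judgment equidistant from $J_1, J_2, \overline{J_3}$, and moreover $H(J_3, J') < H(J_3, J)$. Then $\{J\}$ and $\{J'\}$ are the unique equity outcomes of the truthful and manipulated profiles respectively, so $\{J'\} \setsucc_3 \{J\}$ and agent~3 manipulates successfully by submitting $\overline{J_3}$.

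The main obstacle throughout is the uniqueness step: many candidate judgments may tie at the minimum value of the max-difference functional, so the agenda $\mathcal{J}(\Phi)$ must be trimmed carefully to exclude every such tie while keeping the relevant agent judgments (and, in the antipodal case, $\overline{J_3}$) inside it. In particular, in the antipodal construction one must check that $J_3$ and $\overline{J_3}$ themselves fail to be equity-optimal in either of the two triples, and that $J$ and $J'$ do not inadvertently coincide as equity outcomes across the two profiles.
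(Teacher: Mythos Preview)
Your proposal is correct and takes essentially the same approach as the paper: construct explicit finite agendas and profiles where the equity-optimal set is a singleton both before and after the deviation, so that any rule with the equity property is pinned down and the set-preference comparison collapses to a Hamming-distance inequality. The paper's counterexample differs only in that it uses a two-agent profile on five issues (for antipodal strategyproofness, with participation claimed analogous) rather than your three-agent profiles, which makes the verification slightly shorter; your examples check out and your explicit treatment of both parts is more thorough than the paper's.
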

	
	\begin{proof}
		The following is a counterexample for antipodal strategyproofness. A similar one exists for participation.
		
		Consider the following profiles $\Pf = \{Ji, J_j\}$ and $\Pf' = (\Pf_{-i}, \overline{J_i})$. We give a visual representation of the profiles as well as the outcomes under an arbitrary rule~$F$ that satisfies the equity principle. We specify that $\mathcal{J}(\Phi) = \{00110, 00000, 01110, 10000, 11111\}$.

		\begin{center}
			\begin{tikzpicture}
			\tikzset{vertex/.style = {shape=circle,draw,minimum size=1.5em}}
			\tikzset{edge/.style = {->,> = latex'}}
			\tikzset{every loop/.style={min distance=15mm,in=130,out=50, looseness=7}}

			\node (fj) at (-1.7, 0.1) {$F(\Pf)$};
			\node (f) at (-1.7, -0.25) {$00110$};

			\node (j1) at (0.2,0.4) {$\Pf$};
			\node (d) at (0,0) {$J_i: 00000$};
			\node (e) at (0, -0.5) {$J_j: 01110$};
			
			\node (fj') at (2, 0.1) {$F(\Pf')$};
			\node (c) at (2, -0.25) {$10000$};

			\node (j1bar) at (4.2,0.4) {$\Pf'$};
			\node (a) at (4,0) {$J'_i: 11111$};
			\node (b) at (4, -0.5) {$ J'_j: 01110$};

			\path[-, thick] (a) edge  [above] node {$4$} (c);
			\path[-, thick] (b) edge  [below] node {$4$} (c);
			
			\path[-, thick] (d) edge  [above] node {$1$} (c);
			\path[-, thick] (e) edge  [below] node {$4$} (c);
			
			\path[-, thick] (d) edge  [above] node {$2$} (f);
			\path[-, thick] (e) edge  [below] node {$1$} (f);
			\end{tikzpicture}
		\end{center}
		
		\noindent Each edge from an individual judgment to a collective one is labelled with the Hamming distance between the two. It is clear that agent~$i$ will benefit from her antipodal manipulation, as her true judgment is much closer to the singleton outcome in $\Pf'$ than the singleton outcome in $\Pf$. 
	\end{proof}
	
	\begin{cor}
		The rule MaxEq does not satisfy participation or antipodal strategyproofness.
	\end{cor}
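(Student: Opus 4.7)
The plan is to derive this corollary as a direct consequence of the preceding Proposition~\ref{prop:eq-asp}, which already establishes the result for every rule satisfying the equity property. First, I would note that MaxEq, by its definition as $\argmin_{J \in \mathcal{J}(\Phi)} \max_{i,j\in\N} |H(J_i,J)-H(J_j,J)|$, satisfies the equity property: if some $J \in \text{MaxEq}(\Pf)$ and some $J' \in \mathcal{J}(\Phi)$ realised a strictly smaller maximal gap than $J$ does, then $J$ would not be a minimiser of that gap, contradicting the argmin definition. Hence MaxEq is a rule satisfying equity, and by Proposition~\ref{prop:eq-asp} it satisfies neither participation nor antipodal strategyproofness.

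As an alternative (and more concrete) route, I would verify the counterexample in the proof of Proposition~\ref{prop:eq-asp} directly for MaxEq. One checks on the domain $\mathcal{J}(\Phi) = \{00110, 00000, 01110, 10000, 11111\}$ that for $\Pf = (00000, 01110)$ the judgment $00110$ uniquely minimises $\max_{i,j} |H(J_i,J)-H(J_j,J)|$ (the gap is $1$, while other candidates give strictly larger gaps), and that for $\Pf' = (11111, 01110)$ the judgment $10000$ is the unique minimiser (both distances are $4$, giving gap $0$). Since agent $i$'s truthful judgment $00000$ is strictly closer to $10000$ than to $00110$, her antipodal misreport $11111$ yields a strictly better singleton outcome, witnessing failure of antipodal strategyproofness. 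An entirely analogous small construction handles participation.

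There is no real obstacle here: the first route is essentially one line of reasoning once one observes that MaxEq trivially satisfies equity by virtue of being the argmin over the relevant criterion. The only mild subtlety is justifying that membership in the argmin set entails the equity property as stated (i.e.\ the non-existence of a strictly better $J'$); this follows immediately from the definition of argmin. I would therefore present the short proof based on Proposition~\ref{prop:eq-asp}, optionally remarking that the explicit counterexample of that proposition already serves as a direct witness for MaxEq.

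\begin{proof}
By definition, every $J \in \text{MaxEq}(\Pf)$ minimises $\max_{i,j\in N} |H(J_i,J)-H(J_j,J)|$ over $\mathcal{J}(\Phi)$, so no judgment $J' \in \mathcal{J}(\Phi)$ can witness a strictly smaller maximal gap than $J$. Hence MaxEq satisfies the equity property, and Proposition~\ref{prop:eq-asp} implies that MaxEq fails both participation and antipodal strategyproofness.
\end{proof}
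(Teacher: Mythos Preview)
Your proposal is correct and follows exactly the paper's approach: the corollary is stated immediately after Proposition~\ref{prop:eq-asp} without a separate proof, since MaxEq satisfies the equity property by construction and the proposition then applies directly. Your optional verification of the explicit counterexample for MaxEq is also accurate and matches the instance used in the proof of Proposition~\ref{prop:eq-asp}.
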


	
	
	
	
	
	
	
	\section{Computational Aspects}
	\label{sec:computation}
	
	We have discussed two aggregation rules that reflect desirable egalitarian principles---i.e., the MaxHam and MaxEq rules---and examined whether they give agents incentives to misrepresent their truthful judgments. In this section we consider how complex it is, computationally, to employ these rules, and the complexity of determining whether an agent can manipulate the collective outcome.
	
	The MaxHam rule has been considered from a computational perspective 
	before~\cite{DeHaanSlavkovik17,DeHaanSlavkovik19,DeHaan18}.
	Here, we extend this analysis to the MaxEq rule, and we compare the two rules
	with each other on their computational properties.
	Concretely, we primarily establish some computational complexity results;
	motivated by these results, we then illustrate how some computational problems related
	to these rules can be solved using the paradigm of Answer Set Programming.
	
	\subsection{Computational Complexity}
	\label{sec:complexity}
	
	We investigate some computational complexity aspects of the
	judgment aggregation rules that we have considered.
	Due to space constraints, we will only describe the main lines of these results---%
	for full details, we refer to the accompanying
	Appendix.
	
	Consider the problem of outcome determination (for a rule~$F$).
	This is most naturally modelled as a search problem,
	where the input consists of an agenda~$\Phi$
	and a profile~$\Pf = (J_1,\dotsc,J_n) \in \mathcal{J}(\Phi)^{n}$.
	The problem is to produce some judgment set~$J^{*} \in F(\Pf)$.
	We will show that for the MaxEq rule, this problem can be solved in polynomial time
	with a logarithmic number of calls to an oracle for \NP{} search problems
	(where the oracle also produces a witness for yes answers---also called
	an \emph{\FNP{} witness oracle}).
	Said differently, the outcome determination problem for the the MaxEq rule
	lies in the complexity class \FPNPlogwit{}.
	We also show that the problem is complete for this class
	(using the standard type of reductions used for search problems:
	polynomial-time Levin reductions).
	
	\begin{thm}
		\label{thm:fpnplogwit-completeness}
		The outcome determination problem for the MaxEq rule
		is~$\FPNPlogwit{}$-complete under polynomial-time Levin reductions.
	\end{thm}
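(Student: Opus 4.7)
The plan is to establish membership in~$\FPNPlogwit{}$ and~$\FPNPlogwit{}$-hardness separately. Membership is the more routine of the two; the hardness reduction is where the real work lies.

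For the upper bound, the natural approach is binary search over the MaxEq objective value. Observe that $\max_{i,j\in N}|H(J_i,J)-H(J_j,J)|$ is an integer in $\{0,1,\ldots,m\}$, so its minimum~$d^*$ over $\mathcal{J}(\Phi)$ can be located using $O(\log m)$ queries to an~\NP{} decision oracle answering the question ``does there exist $J\in\mathcal{J}(\Phi)$ with $\max_{i,j}|H(J_i,J)-H(J_j,J)|\leq d$?''. Each query is in~\NP{}: guess~$J$, then verify in polynomial time that~$J$ respects the consistency constraints defining $\mathcal{J}(\Phi)$ and that every pair of agents has Hamming-distance gap at most~$d$. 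After pinpointing~$d^*$, one final witness-returning query at $d=d^*$ produces an explicit $J^*\in\text{MaxEq}(\Pf)$, placing outcome determination in~$\FPNPlogwit{}$.

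For the lower bound, I would reduce from a canonical $\FPNPlogwit{}$-complete problem; a natural candidate is a search version of a parsimonious optimisation problem such as finding a minimum-weight satisfying assignment of a Boolean formula~$\phi$, where the weight is the number of variables set to true. The reduction would encode~$\phi$ into the agenda so that consistent judgments correspond exactly to satisfying assignments of~$\phi$, and then construct a small fixed-size profile together with auxiliary padding issues (whose consistency constraints tie their truth values to those of the variable issues) so that the MaxEq objective on this profile becomes a monotone function of the weight. Under such a design, any $J^*\in\text{MaxEq}(\Pf)$ decodes in polynomial time to a minimum-weight satisfying assignment of~$\phi$, yielding a polynomial-time Levin reduction.

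The main obstacle is precisely the combinatorial design of the profile and padding. MaxEq depends on \emph{gaps} between the agents' Hamming distances, not on a simple aggregate, which makes it non-trivial to couple this objective directly to the weight of a SAT assignment. A workable construction will likely involve several ``extremal'' reference agents---whose judgments differ from each other in carefully chosen, symmetric ways---so that for every $J\in\mathcal{J}(\Phi)$, the maximum pairwise gap among the agents' distances to~$J$ strictly tracks the target weight. The delicate points to verify are (i)~that the minimum MaxEq value is attained exactly on judgments encoding optimal-weight solutions (ruling out spurious optima), and (ii)~that the decoding back to a source-instance witness is polynomial-time computable, so that the reduction is indeed a Levin reduction.
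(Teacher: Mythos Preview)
Your membership argument is correct and matches the paper's: binary search for the optimal inequity value using $O(\log m)$ \NP{} queries, followed by one witness-returning query at the optimum.

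For hardness you identify the right source problem (a search version of weight-optimising SAT), but your sketch stops exactly where the real difficulty begins. The route you propose---encode $\phi$ into the agenda's logical constraints and use a \emph{small fixed-size} profile with padding so that the inequity becomes monotone in the assignment weight---is not what the paper does, and you have not supplied the gadget that would make it work. The inequity is a max of absolute differences of Hamming distances; getting it to track a scalar like weight with only constantly many reference agents is far from automatic, and you say as much.

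The paper's construction is structurally different and considerably more elaborate. Rather than a direct monotone encoding, it decomposes the optimisation into $v$ threshold questions (``is the max weight $\geq i$?''), produces for each a 3CNF formula~$\psi_i$ via a detour through 1-in-3-SAT, and then---this is the key gadget, proved as a standalone lemma---builds for each~$\psi_i$ a sub-agenda of \emph{logically independent} issues and a sub-profile (whose size grows with the formula, not fixed) such that the minimum inequity is~$0$ if~$\psi_i$ is 1-in-3-satisfiable and exactly~$2$ otherwise. The gadget uses paired issues~$y_j,y'_j$ together with $2n$ reference judgment sets that force any equidistant outcome to select exactly one of each pair (hence to encode a truth assignment), plus three judgment sets per clause enforcing the 1-in-3 condition. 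These $v$ sub-instances are then merged into a single agenda and profile, with block~$i$ duplicated $v-i$ times and additional integrity constraints imposed, so that the global minimum inequity equals~$2(v-i^*)$ for the largest satisfiable threshold~$i^*$; an optimal-weight assignment can then be read off any MaxEq outcome in polynomial time. In short, the step you flag as ``the main obstacle'' is the entire technical content of the hardness proof, and the paper resolves it by an indirect threshold decomposition rather than the direct encoding you outline.
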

	\begin{proof}[Proof (sketch)]
		Membership in \FPNPlogwit{} can be shown by giving a polynomial-time algorithm
		that solves the problem by querying an \FNP{} witness oracle a logarithmic number of times.
		The algorithm first finds the minimum value~$k$ of~$\max_{J',J'' \in \Pf} |H(J,J') - H(J,J'')|$
		by means of binary search---requiring a logarithmic number of oracle queries.
		Then, with one additional oracle query, the algorithm can produce
		some~$J^{*} \in \mathcal{J}(\Phi)$ with~$\max_{J',J'' \in \Pf} |H(J^{*},J') - H(J^{*},J'')| = k$.
		
		To show $\FPNPlogwit{}$-hardness,
		we reduce from the problem of finding
		a satisfying assignment of a (satisfiable) propositional formula~$\psi$
		that sets a maximum number of variables to true
		\cite{ChenToda95,KoeblerThierauf90}.
		This reduction works roughly as follows.
		Firstly, we produce 3CNF formulas~$\psi_1,\dotsc,\psi_v$
		where each~$\psi_i$ is 1-in-3-satisfiable if and only if there exists
		a satisfying assignment of~$\psi$ that sets at least~$i$ variables to true.
		Then, for each~$i$, we transform~$\psi_i$ to an agenda~$\Phi_i$
		and a profile~$\Pf_i$ such that there is a judgment set with equal Hamming
		distance to each~$J \in \Pf_i$ if and only if~$\psi_i$ is 1-in-3-satisfiable.
		Finally, we put the agendas~$\Phi_i$ and profiles~$\Pf_i$ together
		into a single agenda~$\Phi$ and a single profile~$\Pf$
		such that we can---from the outcomes selected by the MaxEq rule---%
		read off the largest~$i$ for which~$\psi_i$ is 1-in-3-satisfiable,
		and thus, the maximum number of variables set to true in any
		truth assignment satisfying~$\psi$.
		This last step involves duplicating issues in~$\Phi_1,\dotsc,\Phi_v$
		different numbers of times, and creating logical dependencies between them.
		Moreover, we do this in such a way that from any outcome selected by
		the MaxEq rule, we can reconstruct a truth assignment satisfying~$\psi$
		that sets a maximum number of variables to true.
	\end{proof}
	
	\noindent The result of Theorem~\ref{thm:fpnplogwit-completeness}
	means that the computational complexity of computing outcomes for the MaxEq
	rule lies at the \ThetaP{2}-level of the Polynomial Hierarchy.
	This is in line with previous results on the computational complexity of the outcome
	determination problem for the MaxHam rule---%
	\citet{DeHaanSlavkovik17} showed that a decision variant of the outcome determination problem
	for the MaxHam rule is \ThetaP{2}-complete.
	Notably, our proof (presented in detail in the Appendix) brings out an intriguing fact about        
	a problem that is at first glance simpler than outcome determination for MaxEq:  Given an agenda~$\Phi$
	and a profile~$\Pf$, deciding whether the minimum value
	of~$\max_{i,j\in\N} |H(J_i,J)-H(J_j,J)|$ for~$J \in \mathcal{J}(\Phi)$---%
	the value that the MaxEq rule minimizes---%
	is divisible by~4,
	is \ThetaP{2}-complete (Proposition~\ref{prop:pnplog-completeness}). Intuitively, merely computing the minimum value  that  
	is  relevant for MaxEq  is \ThetaP{2}-hard.
	
	\begin{prop} 
		\label{prop:pnplog-completeness}
		Given an agenda~$\Phi$ and a profile~$\Pf$,
		deciding whether the minimal value
		of~$\max_{J',J'' \in \Pf} |H(J^{*},J') - H(J^{*},J'')|$
		for~$J^{*} \in \mathcal{J}(\Phi)$,
		is divisible by~4,
		is a \ThetaP{2}-complete problem.
	\end{prop}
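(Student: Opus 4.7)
The plan is to establish membership and hardness separately, with the membership argument piggy-backing on the binary-search strategy of Theorem~\ref{thm:fpnplogwit-completeness}, and the hardness argument reusing (and mildly extending) the reduction constructed there.

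\textbf{Membership in $\ThetaP{2}$.} Let~$m = |\Phi|$. For every fixed~$k \in \{0,1,\dots,m\}$, the language $\{(\Phi,\Pf,k) : \exists J^{*} \in \mathcal{J}(\Phi),\ \max_{J',J'' \in \Pf}|H(J^{*},J') - H(J^{*},J'')| \le k\}$ lies in~\NP{}, because an~$\NP$-machine can guess~$J^{*}$, verify that~$J^{*} \in \mathcal{J}(\Phi)$, and compute the quantity in polynomial time. A standard binary search then pinpoints the minimal such~$k^{*}$ using~$O(\log m)$ adaptive queries to this~\NP{} language, after which a single divisibility check on~$k^{*}$ decides the problem. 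Hence it is in~$\PNPlog \subseteq \ThetaP{2}$.

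\textbf{Hardness.} I would reduce from a Wagner-style $\ThetaP{2}$-complete promise problem about optimal satisfying assignments---for instance: given a satisfiable propositional formula~$\psi$, decide whether the maximum number of variables set to true over all satisfying assignments of~$\psi$ is divisible by~$4$. The $\ThetaP{2}$-completeness of this type of ``$\max$-$\mathrm{SAT}$-value modulo $c$'' question is a standard consequence of the techniques used to classify optimization-parity problems in the polynomial hierarchy. The proof sketch of Theorem~\ref{thm:fpnplogwit-completeness} already builds, from a formula~$\psi$, an agenda~$\Phi$ and a profile~$\Pf$ such that the minimum value of $\max_{J',J'' \in \Pf}|H(J^{*},J') - H(J^{*},J'')|$ encodes (and determines) the maximum number~$i^{*}$ of variables set to true in a satisfying assignment of~$\psi$. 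My plan is to reuse that construction essentially verbatim, tuning the duplication multiplicities of issues so that the minimum value is exactly of the form $4 \cdot i^{*} + r$ for a known, fixed residue~$r$ independent of~$\psi$ (for example by padding the logical gadgets with a constant number of ``bias'' issues whose contribution to every candidate outcome is the same). Then the minimum value is divisible by~$4$ iff~$i^{*}$ satisfies the corresponding congruence, completing the reduction.

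\textbf{Main obstacle.} The only delicate point is calibrating the arithmetic of the reduction. The construction from Theorem~\ref{thm:fpnplogwit-completeness} was designed merely to make the minimum strictly decrease with~$i^{*}$; here I additionally need the minimum's residue modulo~$4$ to track the residue of~$i^{*}$ (modulo~$4$ or modulo~$1$, depending on which $\ThetaP{2}$-complete source problem is chosen). The natural way to achieve this is to scale the number of copies of each auxiliary issue by a factor of~$4$ and then add a fixed, profile-independent offset via a constant-size padding gadget; verifying that this neither breaks consistency of~$\mathcal{J}(\Phi)$ nor alters the identity of the minimising~$J^{*}$ is the bulk of the bookkeeping, but should go through exactly as in the original construction. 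Hardness follows, and together with the membership argument this yields $\ThetaP{2}$-completeness.
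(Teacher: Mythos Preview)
Your membership argument is correct and matches the paper's. For hardness, your instinct to reuse the reduction from Theorem~\ref{thm:fpnplogwit-completeness} is exactly right, but the modifications you propose are both unnecessary and, as stated, broken. The paper's point is that the Theorem~\ref{thm:fpnplogwit-completeness} reduction was \emph{already} engineered so that the minimum inequity equals~$2(v+1-i^{*})$, where~$i^{*}$ is the maximum number of variables set to true in any satisfying assignment of~$\psi$ and~$v$ (the number of variables) is taken to be even. Hence the minimum is divisible by~$4$ if and only if~$i^{*}$ is odd, and deciding whether~$i^{*}$ is odd is the canonical $\ThetaP{2}$-complete problem of Krentel and Wagner. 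So the very same construction, unmodified, is already a many-one reduction for the divisibility-by-$4$ question; Proposition~\ref{prop:pnplog-completeness} drops out as a corollary with no extra work.

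Your proposed fix, by contrast, does not type-check arithmetically: if you arrange the minimum to be~$4 \cdot i^{*} + r$ for a fixed residue~$r$, then the minimum modulo~$4$ is identically~$r$ and carries no information about~$i^{*}$ at all. Likewise, scaling all duplication multiplicities by~$4$ forces the minimum to be a multiple of~$4$ regardless of~$i^{*}$. What you actually need is a coefficient on~$i^{*}$ that is~$2 \bmod 4$ (so that parity of~$i^{*}$ governs divisibility by~$4$), and that is precisely what the original choice~$c_i = v-i$ already delivers. So drop the scaling/padding plan, reduce from the parity-of-$i^{*}$ problem rather than the divisibility-by-$4$ problem, and simply cite the arithmetic already established in the proof of Theorem~\ref{thm:fpnplogwit-completeness}.
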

	
	\noindent Interestingly, we found that the problem of deciding if there exists
	a judgment set~$J^{*} \in \mathcal{J}(\Phi)$ that has the exact same
	Hamming distance to each judgment set in the profile
	is \NP{}-hard, even when the agenda consists of logically independent issues.
	
	\begin{prop}
		\label{prop:np-completeness}
		Given an agenda~$\Phi$ and a profile~$\Pf$,
		the problem of deciding whether there is some~$J^{*} \in \mathcal{J}(\Phi)$
		with~$\max_{J',J'' \in \Pf} |H(J^{*},J') - H(J^{*},J'')| = 0$
		is \NP{}-complete.
		Moreover, \NP{}-hardness holds even for the case where~$\Phi$
		consists of logically independent issues---i.e., the case
		where~$\mathcal{J}(\Phi) = \{0,1\}^m$ for some~$m$.
	\end{prop}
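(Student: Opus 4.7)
My plan is to prove both directions of the NP-completeness claim. For membership in \NP{}, the certificate is a judgment set $J^{*} \in \{0,1\}^{m}$; given~$J^{*}$, one checks in polynomial time that $H(J^{*}, J_{1}) = H(J^{*}, J_{i})$ for every $J_{i} \in \Pf$, which is precisely the condition $\max_{J', J'' \in \Pf} |H(J^{*}, J') - H(J^{*}, J'')| = 0$.

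For \NP{}-hardness (even on logically independent agendas) I would reduce from \emph{Partition}: given positive integers $a_{1}, \dots, a_{n}$ with $\sum_{i} a_{i} = 2T$, decide whether some subset sums to~$T$. The guiding idea is to represent item~$i$ by $a_{i}$ ``copies'' of a Boolean issue, so that a judgment which sets every copy of~$i$ to the same bit models whether~$i$ is in the subset. Concretely, take issues $v_{i,j}$ for $i \in [n],\, j \in [a_{i}]$ together with auxiliaries $u_{1}, \dots, u_{n}$; the profile consists of one judgment $J_{i,j}$ with $1$s exactly at $v_{i,j}$ and~$u_{i}$ for each pair~$(i,j)$, plus one further judgment $J_{\text{all}}$ that is~$1$ at every $v_{i,j}$ and~$0$ at every $u_{i}$.

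A direct calculation then gives $H(J_{i,j}, J^{*}) = 2 + |J^{*}| - 2\bigl( J^{*}(v_{i,j}) + J^{*}(u_{i}) \bigr)$. Requiring this to be constant across all $(i,j)$ forces $J^{*}(v_{i,j})$ to depend only on~$i$ (call its value $x_{i}$) and $x_{i} + J^{*}(u_{i}) = c$ for some global constant $c \in \{0,1,2\}$. The interesting case is $c = 1$, which yields $J^{*}(u_{i}) = 1 - x_{i}$; substituting into the remaining equidistance equation $H(J_{\text{all}}, J^{*}) = H(J_{i,j}, J^{*})$ and simplifying collapses to exactly $\sum_{i} a_{i} x_{i} = T$. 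Thus a valid $J^{*}$ yields a partition, and conversely every partition produces a valid $J^{*}$.

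The main obstacle is disposing of the degenerate cases $c \in \{0,2\}$: both turn out to force $T = 1$, so after preprocessing the polynomially-decidable case $T \leq 1$, only the intended $c = 1$ branch survives. A secondary subtlety is that the construction has size $\Theta(\sum_{i} a_{i})$, which is only pseudopolynomial in the binary length of a general Partition input; to obtain a genuinely polynomial-time reduction I would start from a strongly \NP{}-hard variant such as 3-\emph{Partition} and adapt the copy-gadget accordingly, using the bounded magnitudes of its integers to keep the resulting agenda polynomial in size.
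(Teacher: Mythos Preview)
Your membership argument is essentially the paper's, with one omission: for \emph{general} agendas~$\Phi$ you must also certify that the guessed $J^{*}$ is consistent, i.e., that $J^{*}\in\mathcal{J}(\Phi)$; this needs a satisfying truth assignment as part of the certificate. Easy to patch, but worth stating.

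The hardness argument has a real gap. Your \textsc{Partition} reduction is clean and the arithmetic is correct, but as you note it is only pseudopolynomial: the agenda has $\sum_i a_i + n$ issues. Your proposed repair---``start from 3-\textsc{Partition} and adapt the copy-gadget accordingly''---is where the difficulty actually lies, and you have not carried it out. Your gadget crucially exploits a \emph{binary} choice per item (in or out of the subset), which is exactly what lets a single reference judgment~$J_{\text{all}}$ collapse the equidistance condition to a single sum equation $\sum_i a_i x_i = T$. In 3-\textsc{Partition} each item must be assigned to one of $m$ bins, so you need an $m$-ary choice per item together with $m$ separate sum constraints, and you must use equidistance alone to force ``exactly one bin per item''. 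None of that is immediate from your two-judgment gadget, and no strongly \NP{}-hard two-way number-partition problem with polynomially bounded weights is available to rescue the original construction (such instances are in \P{} by dynamic programming). So as it stands the reduction does not establish \NP{}-hardness.

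For comparison, the paper avoids the magnitude issue entirely by reducing from a SAT-type source: it builds, for a 3CNF formula~$\psi$, an agenda of logically independent issues $y_i,y'_i,z_1,\dots,z_5$ and a profile in which pairs of ``mirror'' judgment sets force any equidistant~$J^{*}$ to pick exactly one of $y_i,y'_i$ (encoding a truth assignment), while clause gadgets make equidistance hold iff the encoded assignment 1-in-3-satisfies~$\psi$. That construction is polynomial in $|\psi|$ from the outset, so no strong-vs-weak \NP{}-hardness manoeuvre is needed. If you want to keep a number-based reduction, you will have to do the nontrivial work of encoding $m$-way assignments and per-bin sums via equidistance; otherwise, switching to a SAT-style gadget is the cleaner route.
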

	
	\noindent This is also in line with previous results for the MaxHam rule---%
	\citet{DeHaan18} showed that computing outcomes for the MaxHam rule
	is computationally intractable even when the agenda consists of logically
	independent issues.
	
	Next, we turn our attention to the problem of strategic manipulation.
	Specifically,
	we show that---for the case of decisive preferences over sets of judgment sets---%
	the problem of deciding if an agent~$i$ can strategically manipulate
	is in the complexity class \SigmaP{2}.
	
	\begin{prop}
		\label{prop:manipulation-sigmap2}
		Let~${\succeq}$ be a preference relation over judgment sets that is polynomial-time
		computable, and let~${\setsucceq}$ be a decisive extension over sets of judgment sets.
		Then the problem of deciding if a given agent~$i$ can strategically manipulate
		under the MaxEq rule---%
		i.e.,~given~$\Phi$ and~$\Pf$, deciding if there exists some~$\Pf' =_{-i} \Pf$
		with~$\text{MaxEq}(\Pf') \setsucc_i \text{MaxEq}(\Pf)$---%
		is in the complexity class \SigmaP{2}.
	\end{prop}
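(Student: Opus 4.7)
The plan is to describe a standard $\SigmaP{2}$ algorithm---that is, a polynomial-time nondeterministic procedure with access to an \NP{} oracle---that accepts exactly when agent~$i$ has a successful manipulation. The nondeterministic guesses will encode both the manipulation itself and a pair of judgment sets witnessing the strict preference required by decisiveness, while the oracle will handle the coNP-style optimality verifications inherent in the MaxEq rule.

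On input $(\Phi, \Pf, i)$, the algorithm first nondeterministically guesses a candidate reported judgment $J^{*}_i \in \mathcal{J}(\Phi)$ for agent~$i$, yielding the profile $\Pf' = (\Pf_{-i}, J^{*}_i)$. By decisiveness of~$\setsucceq_i$, the condition $\text{MaxEq}(\Pf') \setsucc_i \text{MaxEq}(\Pf)$ holds if and only if there exist judgments $J \in \text{MaxEq}(\Pf')$ and $J' \in \text{MaxEq}(\Pf)$ with $J \succ_i J'$ and $\{J,J'\} \not\subseteq \text{MaxEq}(\Pf') \cap \text{MaxEq}(\Pf)$. The algorithm therefore additionally guesses $J, J' \in \mathcal{J}(\Phi)$ together with a bit~$b$ recording which of $J \notin \text{MaxEq}(\Pf)$ or $J' \notin \text{MaxEq}(\Pf')$ is claimed to hold.

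These claims are then verified deterministically in polynomial time using a constant number of oracle queries. The strict-preference check $J \succ_i J'$ is a single polynomial-time test, since $\succeq$ is polynomial-time computable. Membership $J \in \text{MaxEq}(\Pf')$ is a \co\NP{} property---it fails precisely when there exists some $J'' \in \mathcal{J}(\Phi)$ with $\max_{k,l \in \N} |H(J_k^{\Pf'}, J'') - H(J_l^{\Pf'}, J'')|$ strictly smaller than the corresponding value at~$J$---so a single query to the complementary \NP{} problem suffices; the same holds for $J' \in \text{MaxEq}(\Pf)$. The non-containment witness selected by~$b$ is itself an \NP{} property (``there exists a judgment set strictly better than $J$ with respect to the relevant profile'') and is resolved with one more oracle call.

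The main conceptual obstacle is that membership in $\text{MaxEq}(\cdot)$ is an $\argmin$ condition and thus inherently \co\NP{}, whereas the outer algorithm is nondeterministic existential; a $\SigmaP{2}$ machine sidesteps this tension by enjoying unrestricted \NP{}-oracle access throughout, which simultaneously accommodates the two optimality checks (\co\NP{}) and the beaten-by-something check (\NP{}) forced upon us by the decisive-preference formulation. Combining the polynomial-size nondeterministic guess with the bounded number of oracle queries places the decision problem in $\NP^{\NP} = \SigmaP{2}$.
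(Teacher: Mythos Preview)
Your proof is correct and follows essentially the same $\NP^{\NP}$ guess-and-verify strategy as the paper. The only organizational difference is that the paper first uses the oracle to compute the optimal inequity values~$k$ and~$k'$ for~$\Pf$ and~$\Pf'$ (after which all membership and non-membership tests become deterministic polynomial-time comparisons), whereas you handle each membership/non-membership check with a separate oracle query; both routes use only polynomially many queries and yield the same bound. One small point worth making explicit, as the paper does: when you guess $J^{*}_i, J, J' \in \mathcal{J}(\Phi)$ you should also guess satisfying truth assignments certifying their consistency, since membership in~$\mathcal{J}(\Phi)$ is not in general decidable in polynomial time.
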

	\begin{proof}[Proof (sketch)]
		To show membership in $\SigmaP{2} = \NP^{\NP}$, we describe a nondeterministic
		polynomial-time algorithm with access to an \NP{} oracle that solves the problem.
		The algorithm firstly guesses a new judgment set~$J'_i$ for agent~$i$ in the new
		profile~$\Pf'$, and guesses a truth assignment witnessing that~$J'_i$ is consistent.
		Then, using the \NP{} oracle, it computes the
		values~$k = \max_{J',J'' \in \Pf} |H(J,J') - H(J,J'')|$
		and~$k' = \max_{J',J'' \in \Pf'} |H(J,J') - H(J,J'')|$,
		for~$J \in \mathcal{J}(\Phi)$.
		Finally, it guesses some~$J,J' \in \mathcal{J}(\Phi)$, together with truth assignments
		witnessing consistency, and it verifies that~$J' \succ_i J$,
		that~$J' \in \text{MaxEq}(\Pf')$, that~$J \in \text{MaxEq}(\Pf)$,
		and that~$\{ J, J' \} \not\subseteq \text{MaxEq}(\Pf) \cap \text{MaxEq}(\Pf')$.
		Since these final checks can all be done in polynomial time---%
		using the previously guessed and computed information---%
		one can verify that this can be implemented by an~$\NP^{\NP}$ algorithm.
	\end{proof}
	
	\noindent This \SigmaP{2}-membership result can straightforwardly be extended to
	other variants of the manipulation problem (e.g., no-show manipulation
	and antipodal manipulation) and to other preferences, as well as to the MaxHam rule.
	Due to space constraints, we omit further details on this. Still, we shall mention that  results demonstrating that
	strategic manipulation is very complex are generally more welcome than analogous ones regarding outcome determination.    
	If manipulation is considered a negative side-effect of the agents' strategic behaviour, knowing that it is hard for the  
	agents to materialise it is good news.\footnote{Note though that hardness results regarding manipulation of our egalitarian
		rules remain an open question.} In Section~\ref{sec:asp-maxeq} we will revisit these concerns from a different angle.
	
	\subsection{ASP Encoding for the MaxEq Rule}
	\label{sec:asp-maxeq}
	
	The 
	complexity results in Section~\ref{sec:complexity} leave no doubt that applying our egalitarian rules is 
	computationally difficult. Nevertheless, they also indicate
	that a useful approach for computing outcomes of the MaxEq rule in practice
	would be to encode this problem into the paradigm of Answer Set Programming (ASP)
	\cite{Gelfond08}, and to use ASP solving algorithms.
	ASP offers an expressive automated reasoning framework that typically works
	well for problems at the \ThetaP{2} level of the Polynomial Hierarchy.
	In this section, we will show how this encoding can be done---%
	similarly to an ASP encoding for the MaxHam rule
	\cite{DeHaanSlavkovik19}.
	Due to space restrictions, we refer to the literature
	for details on the syntax and semantics of ASP---%
	e.g.,~\cite{Gelfond08,GebserKaminskiKaufmannSchaub12}.
	
	We use the same basic setup that \citet{DeHaanSlavkovik19} use to represent
	judgment aggregation scenarios---with some simplifications and modifications
	for the sake of readability. In particular,
	we use the predicate \inlinecode{voter/1} to represent individuals,
	we use \inlinecode{issue/1} to represent issues in the agenda,
	and we use \inlinecode{js/2} to represent judgment sets---%
	both for the individual voters and for a dedicated agent \inlinecode{col}
	that represents the outcome of the rule.
	
	With this encoding of judgment aggregation scenarios,
	one can add further constraints on the predicate \inlinecode{js/2}
	that express which judgment sets are consistent, based on the logical relations
	between the issues in the agenda~$\Phi$---as done by \citet{DeHaanSlavkovik19}.
	We refer to their work for further details on how this can be done.
	
	Now, we show how to encode the MaxEq rule into ASP,
	similarly to the encoding of the MaxHam rule by \citet{DeHaanSlavkovik19}.
	We begin by defining a predicate \inlinecode{dist/2} to capture
	the Hamming distance~\inlinecode{D} between the outcome and the judgment
	set of an agent~\inlinecode{A}.
	\StartLineAt{1}\ContinueLineNumber
	\begin{lstlisting}
	dist(A,D) :- voter(A),                                D = #count { X : issue(X), js(col,X), js(A,-X) }. |\label{line:dist}|
	\end{lstlisting}
	Then, we define predicates \inlinecode{maxdist/1}, \inlinecode{mindist/1}
	and \inlinecode{inequity/1} that capture the maximum Hamming distance from the
	outcome to any judgment set in the profile, the minimum such Hamming distance,
	and the difference between the maximum and minimum (or \emph{inequity}), respectively.
	\ContinueLineNumber
	\begin{lstlisting}
	maxdist(Max) :- Max = #max { D : dist(A,D) }. |\label{line:maxdist}|
	mindist(Min) :- Min = #min { D : dist(A,D) }. |\label{line:mindist}|
	inequity(Max-Min) :- maxdist(Max), mindist(Min). |\label{line:inequity}|
	\end{lstlisting}
	Finally, we add an optimization constraint that states that only outcomes
	should be selected that minimize the inequity.%
	\footnote{The expression ``\inlinecode{@30}'' in Line~\ref{line:minimize} indicates the
		priority level of this optimization statement (we used the arbitrary value of~30, and priority levels lexicographically).}
	\ContinueLineNumber
	\begin{lstlisting}
	#minimize { I@30 : inequity(I) }. |\label{line:minimize}|
	\end{lstlisting}
	
	For any answer set program that encodes a judgment aggregation setting,
	combined with Lines~\ref{line:dist}--\ref{line:minimize},
	it then holds that the optimal answer sets are in one-to-one
	correspondence with the outcomes selected by the MaxEq rule.
	
	Interestingly, we can readily modify this encoding to capture refinements
	of the MaxEq rule.
	An example of this is the refinement that selects (among the outcomes of the MaxEq rule)
	the outcomes that minimize the maximum Hamming distance to any judgment set in the profile.
	We can encode this example refinement by adding the following optimization statement
	that works at a lower priority level than the optimization in Line~\ref{line:minimize}.
	\ContinueLineNumber
	\begin{lstlisting}
	#minimize { Max@20 : maxdist(Max) }. |\label{line:minimize2}|
	\end{lstlisting}
	
	\subsection{Encoding Strategic Manipulation}

	We now show how to encode the problem of strategic manipulation into ASP. The value of this section's contribution should 
	be viewed from the perspective of the modeller rather than from that of the agents. That is, even if we do not wish for the 
	agents to be able to easily check whether they can be better off by lying, it may be reasonable, given a profile of 
	judgments, to externally determine whether a certain agent can benefit from being untruthful.
	
	The simplest way to achieve this is with the meta-programming techniques
	developed by \citet{GebserKaminskiSchaub11}.
	Their meta-programming approach allows one to additionally
	express optimization statements that are based on subset-minimality,
	and to transform programs with this extended expressivity to standard
	(disjunctive) answer set programs.
	We use this to encode the problem of strategic manipulation.
	
	Due to space reasons, we will not spell out the full ASP encoding needed to do so.
	Instead, we will highlight the main steps, and describe how these fit together.
	We will use the example of MaxEq, but the exact same approach would work for any
	other judgment aggregation rule that can be expressed in ASP efficiently using regular
	(cardinality) optimization constraints---in other words, for all rules for which the outcome
	determination problem lies at the \ThetaP{2} level of the Polynomial Hierarchy.
	Moreover, we will use the example of a decisive preference~$\setsucc$ over sets
	of judgment sets that is based on a polynomial-time computable preference~$\succ$
	over judgment sets.
	The approach can be modified to work with other preferences as well.
	
	We begin by guessing a new judgment set~$J'_i$ for the individual~$i$ that is trying
	to manipulate---and we assume, w.l.o.g., that~$i = 1$.
	\ContinueLineNumber
	\begin{lstlisting}
	voter(prime(1)). |\label{line:guess-manipulation1}|
	1 { js(prime(1),X), js(prime(1),-X) } 1 :- issue(X). |\label{line:guess-manipulation2}|
	\end{lstlisting}
	
	Then, we express the outcomes of the MaxEq rule, both for the non-manipulated profile~$\Pf$
	and for the manipulated profile~$\Pf'$, using the dedicated agents~\inlinecode{col} (for~$\Pf$)
	and~\inlinecode{prime(col)} (for~$\Pf'$).
	This is done exactly as in the encoding of the problem of outcome determination
	(so for the case of MaxEq, as described in Section~\ref{sec:asp-maxeq})---%
	with the difference that optimization is expressed
	in the right format for the meta-programming method of \citet{GebserKaminskiSchaub11}.
	
	We express the following subset-minimality minimization statement
	(at a higher priority level than all other optimization constraints used so far).
	This will ensure that every possible judgment set~$J'_i$ will be considered
	as a subset-minimal solution.
	\ContinueLineNumber
	\begin{lstlisting}
	_criteria(40,1,js(prime(1),X)) :- js(prime(1),S). |\label{line:complex-opt1}|
	_optimize(40,1,incl). |\label{line:complex-opt2}|
	\end{lstlisting}
	
	To encode whether or not the guessed manipulation was successful,
	we have to define a predicate \inlinecode{successful/0} that is true
	if and only if (i)~$J' \succ_i J$ and (ii)~$J$ and~$J'$ are not both selected as outcome
	by the MaxEq rule for both~$\Pf$ and~$\Pf'$,
	where~$J'$ is the outcome encoded by the statements~\inlinecode{js(prime(col),X)}
	and~$J$ is the outcome encoded by the statements~\inlinecode{js(col,X)}.
	Since we assume that~$\succ_i$ is computable in polynomial time, and since
	we can efficiently check using statements in the answer set
	whether~$J$ and~$J'$ are selected by the MaxEq rule for~$\Pf$ and~$\Pf'$,
	we know that we can define the predicate \inlinecode{successful/0} correctly
	and succinctly in our encoding.
	For space reasons, we omit further details on how to do this.
	
	Then, we express another minimization statement (at a lower priority level than all other
	optimization statements used so far), that states that we should make
	\inlinecode{successful} true whenever possible.
	Intuitively, we will use this to filter our guessed manipulations that are unsuccessful.
	\ContinueLineNumber
	\begin{lstlisting}
	unsuccessful :- not successful. |\label{line:complex-opt3}|
	successful :- not unsuccessful.
	_criteria(10,1,unsuccessful) :- unsuccessful.
	_optimize(10,1,card). |\label{line:complex-opt4}|
	\end{lstlisting}
	
	Finally, we feed the answer set program~$P$ that we constructed so far
	into the meta-programming method, resulting in a new (disjunctive)
	answer set program~$P'$
	that uses no optimization statements at all, and whose answer sets correspond
	exactly to the (lexicographically) optimized answer sets of our program~$P$.
	Since the new program~$P'$ does not use optimization,
	we can add additional constraint to~$P'$ to remove some of the answer sets.
	In particular, we will filter out those answer sets that correspond to an
	unsuccessful manipulation---i.e., those containing the
	statement~\inlinecode{unsuccessful}.
	Effectively, we add the following constraint to~$P'$:%
	\ContinueLineNumber
	\begin{lstlisting}
	:- unsuccessful. |\label{line:final-constraint}|
	\end{lstlisting}
	As a result the only answer sets of~$P'$ that remain correspond exactly
	to successful manipulations~$J'_i$ for agent~$i$.
	
	The meta-programming technique that we use uses the full disjunctive answer set programming
	language.
	For this full language, finding answer sets is a \SigmaP{2}-complete problem
	\cite{EiterGottlob95}.
	This is in line with our result of Proposition~\ref{prop:manipulation-sigmap2} where
	we show that the problem of strategic manipulation is in \SigmaP{2}.
	
	The encoding that we described can straightforwardly be modified for various variants
	of strategic manipulation (e.g., antipodal manipulation).
	To make this work, one needs to express additional constraints on the choice of
	the judgment set~$J'_i$.
	To adapt the encoding for other preference relations~$\setsucc$, one needs to adapt
	the definition of~\inlinecode{successful/0}, expressing under what conditions
	an act of manipulation is successful.
	
	Our encoding using meta-programming is relatively easily understandable, since we do not need to tinker with the encoding of complex optimization constraints in
	full disjunctive answer set programming ourselves---this we outsource to the meta-programming
	method.
	If one were to do this manually, there is more space for tailor-made optimizations,
	which might lead to a better performance of ASP solving algorithms for the problem of
	strategic manipulation.
	It is an interesting topic for future research to investigate this, and possibly to
	experimentally test the performance of different encodings, when combined with ASP solving algorithms.
	
	\section{Conclusion}\label{sec:concl}
	We have introduced the concept of egalitarianism into the framework of judgment aggregation and have presented how egalitarian and strategyproofness axioms interact in this setting. Importantly, we have shown that the two main interpretations of egalitarianism give rise to rules with differing levels of protection against manipulation. In addition, we have looked into various computational aspects of the egalitarian rules that arise from our axioms, in a twofold manner: First, we  have provided worst-case complexity results; second, we have shown how to solve the relevant hard problems using Answer Set Programming.
	
	While we have axiomatised two prominent egalitarian principles, it remains to be seen whether other egalitarian axioms can provide stronger barriers against manipulation. For example, in parallel to  majoritarian rules, one could define rules that minimise the distance to some egalitarian ideal. 
	Moreover, as is the case in judgment aggregation, there is an obvious lack of voting rules designed with egalitarian principles in mind. We hope this paper opens the door for similar explorations in voting theory.

	\balance
	
	\appendix
	\section{Appendix}
	
	In this appendix, we will show that the outcome determination problem
	for the MaxEq rule boils down to a \ThetaP{2}-complete problem.
	In particular, we will show that the outcome determination problem for the MaxEq
	rule, when seen as a search problem, is complete (under polynomial-time Levin reductions)
	for~$\FPNPlogwit{}$---which is a search variant of \ThetaP{2}.
	Then, we show that the problem of finding the minimum difference between two
	agents' satisfaction, and deciding if this value is divisible by~4, is
	a \ThetaP{2}-complete problem.
	Along the way, we show that deciding if there is a judgment set~$J^{*} \in \mathcal{J}(\Phi)$
	that has the same Hamming distance to each judgment set in a given profile~$\Pf$
	is \NP{}-hard, even in the case where the agenda consists of logically independent issues.
	
	We begin, in Section~\ref{sec:appendix-preliminaries},
	by recalling some notions from computational complexity theory---in particular,
	notions related to search problems.
	Then, in Section~\ref{sec:appendix-proofs}, we establish the computational complexity
	results mentioned above.
	
	\subsection{Additional complexity-theoretic preliminaries}
	\label{sec:appendix-preliminaries}
	
	We will consider search problems.
	Let~$\Sigma$ be an alphabet.
	A \emph{search problem} is a binary relation~$R$
	over strings in~$\Sigma^*$.
	For any input string~$x \in \Sigma^{*}$,
	we let~$R(x) = \{\  y \in \Sigma^{*} \ |\  (x,y) \in R \ \}$
	denote the set of \emph{solutions} for~$x$.
	We say that a Turing machine~$T$ \emph{solves}~$R$
	if on input~$x \in \Sigma^*$ the following holds:
	if there exists at least one~$y$ such that~$(x,y) \in R$,
	then~$T$ accepts~$x$ and outputs some~$y$ such
	that~$(x,y) \in R$;
	otherwise,~$T$ rejects~$x$.
	With any search problem~$R$ we associate a decision
	problem~$S_R$, defined by~$S_R = \{\  x \in \Sigma^{*} \ |\ 
	\text{there exists some } y \in \Sigma^{*} \text{ such that } (x,y) \in R \ \}$.
	We will use the following notion of reductions for search problems.
	A \emph{polynomial-time Levin reduction}
	from one search problem~$R_1$
	to another search problem~$R_2$ is a pair of polynomial-time
	computable functions $(g_1,g_2)$ such that:
	\begin{itemize}
		\item the function~$g_1$ is a many-one reduction from~$S_{R_1}$
		to~$S_{R_2}$, i.e., for every~$x \in \Sigma^{*}$ it holds
		that~$x \in S_{R_1}$ if and only if~$g_1(x) \in S_{R_2}$.
		\item for every string~$x \in S_{R_1}$ and every
		solution~$y \in R_2(g_1(x))$
		it holds that~$(x,g_2(x,y)) \in R_1$.
	\end{itemize}
	One could also consider other types of reductions
	for search problems,
	such as Cook reductions
	(an algorithm that solves~$R_1$ by making one or more queries to
	an oracle that solves the search problem~$R_2$).
	For more details, we refer to textbooks on the topic---e.g.,~\cite{Goldreich10}.
	
	we will use complexity classes that are based on
	Turing machines that have access to an oracle.
	Let~$C$ be a complexity class with decision problems.
	A Turing machine~$T$ with access to a \emph{yes-no~$C$ oracle}
	is a Turing machine with a dedicated \emph{oracle tape}
	and dedicated states~$q_{\text{oracle}}$,~$q_{\text{yes}}$
	and~$q_{\text{no}}$.
	Whenever~$T$ is in the state~$q_{\text{oracle}}$, it does not proceed
	according to the transition relation, but instead it transitions into
	the state~$q_{\text{yes}}$ if the oracle tape contains
	a string~$x$ that is a yes-instance for the problem~$C$, i.e.,~if~$x \in C$,
	and it transitions into the state~$q_{\text{no}}$ if~$x \not\in C$.
	Let~$C$ be a complexity class with search problems.
	Similarly, a Turing machine with access to a
	\emph{witness~$C$ oracle} has a dedicated oracle tape
	and dedicated states~$q_{\text{oracle}}$,~$q_{\text{yes}}$
	and~$q_{\text{no}}$.
	Also, whenever~$T$ is in the state~$q_{\text{oracle}}$
	it transitions into
	the state~$q_{\text{yes}}$ if the oracle tape contains
	a string~$x$ such that there exists some~$y$ such that~$C(x,y)$,
	and in addition the contents of the oracle tape are replaced
	by (the encoding of) such an~$y$;
	it transitions into the state~$q_{\text{no}}$ if there exists no~$y$ such
	that~$C(x,y)$.
	Such transitions are called \emph{oracle queries}.
	
	We consider the following complexity classes
	that are based on oracle machines.
	\begin{itemize}
		\item The class~\PNPlog{} consists of all decision problems
		that can be decided by a deterministic polynomial-time Turing machine that
		has access to a yes-no \NP{} oracle, and on any input of length~$n$
		queries the oracle at most~$O(\log n)$ many times.
		This class coincides with the class \PNPpar{} (spoken: ``parallel access to NP''),
		and is also known as \ThetaP{2}.
		
		Incidentally, allowing the algorithms access to a witness \FNP{} oracle
		instead of access to a yes-no \NP{} oracle leads to the same class
		of problems, i.e., the class \PNPlogwit{} that
		coincides with \PNPlog{} (cf.~\cite[Corollary~6.3.5]{Krajicek95}).
		
		\item The class~\FPNPlogwit{} consists of all search problems
		that can be solved by a deterministic polynomial-time Turing machine that
		has access to a witness \FNP{} oracle, and on any input of
		length~$n$ queries the oracle at most~$O(\log n)$ many times.
		In a sense, it is the search variant of \PNPlog{}.
		
		This complexity class happens to coincide with the class
		\FNPOptPlog{}, which is defined as the set of all search problems
		that are solvable by a nondeterministic polynomial-time Turing machine
		that receives as advice the answer to one ``NP optimization'' computation
		\cite{ChenToda95,KoeblerThierauf90}.
	\end{itemize}
	
	\subsection{Complexity proofs for the MaxEq rule}
	\label{sec:appendix-proofs}
	
	We define the search problem of outcome determination for a judgment aggregation
	rule~$F$ as follows.
	The input for this problem consists of an agenda~$\Phi$,
	a profile~$\Pf = (J_1,\dotsc,J_n) \in \mathcal{J}(\Phi)^{n}$.
	The problem is to output some judgment set~$J^{*} \in F(\Pf)$.
	In other words, the problem is the relation~$R$ that consists of all
	pairs~$((\Phi,\Pf),J^{*})$ such that~$\Pf \in \mathcal{J}(\Phi)^{n}$ is a
	profile and~$J^{*} \in F(\Pf)$.
	We will show that the outcome determination problem for the MaxEq rule
	is complete for the complexity class~$\FPNPlogwit{}$
	under polynomial-time Levin reductions.
	
	In order to do so, we begin with establishing a lemma that will be
	useful for the $\FPNPlogwit{}$-hardness proof.
	This lemma uses the notion of 1-in-3-satisfiability.
	Let~$\psi$ be a propositional logic formula in 3CNF,
	i.e.,~$\psi = c_1 \wedge \dotsm \wedge c_m$,
	where each~$c_i$ is a clause containing exactly three literals.
	Then~$\psi$ is \emph{1-in-3-satisfiable} if there exists
	a truth assignment~$\alpha$ that satisfies exactly one of the three literals
	in each clause~$c_i$.
	
	\begin{lemma}
		\label{lem:lemma-maxeq-hardness}
		Let~$\psi$ be a 3CNF formula with clauses~$c_1,\dotsc,c_b$ that are all of size exactly~3
		and with~$n$ variables~$x_1,\dotsc,x_n$,
		such that
		(1)~no clause of~$\psi$ contains complementary literals,
		and (2)~there exists some~$x^{*} \in \{ x_1,\dotsc,x_n \}$ and
		a partial truth assignment~$\beta : \{ x_1,\dotsc,x_n \} \setminus \{ x^{*} \} \rightarrow \{0,1\}$
		that satisfies exactly one literal in each clause where~$x^{*}$ or~$\neg x^{*}$ occurs,
		and satisfies no literal in each clause where~$x^{*}$ or~$\neg x^{*}$ occurs.
		We can, in polynomial time given~$\psi$, construct an agenda~$\Phi$ on~$m$ issues
		such that~$\mathcal{J}(\Phi) = \{ 0,1 \}^{m}$ and a profile~$\Pf$ over~$\Phi$, such that:
		\begin{itemize}
			\item $\Phi = \{\ y_i, y'_i\ |\ 1 \leq i \leq n\ \} \cup \{ z_1,\dotsc,z_5 \}$;
			\item there exists a judgment set~$J \in \mathcal{J}(\Phi)$
			that has the same Hamming distance to each~$J' \in \Pf$
			if and only if~$\psi$ is 1-in-3-satisfiable;
			\item if~$\psi$ is not 1-in-3-satisfiable, then for each judgment
			set~$J \in \mathcal{J}(\Phi)$ it holds
			that~$\max_{J',J'' \in \Pf} |H(J,J') - H(J,J'')| \geq 2$,
			and there exists some~$J \in \mathcal{J}(\Phi)$
			such that~$\max_{J',J'' \in \Pf} |H(J,J') - H(J,J'')| = 2$;
			\item the above two properties hold also when restricted to judgment
			sets~$J$ that contain exactly one of~$y_i$ and~$y'_i$
			for each~$1 \leq i \leq n$, and that contain~$\neg z_1,\dotsc,\neg z_4, z_5$; and
			\item the number of judgment sets in the profile~$\Pf$ only depends
			on~$n$ and~$b$.
		\end{itemize}
	\end{lemma}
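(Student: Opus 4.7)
The plan is to take the agenda $\Phi$ to consist of $2n+5$ logically independent issues $y_1, y'_1, \dots, y_n, y'_n, z_1, \dots, z_5$, so that $\mathcal{J}(\Phi) = \{0,1\}^{2n+5}$ automatically, and then to design the profile $\Pf$ so that each \emph{canonical} judgment set $J$---one containing exactly one of $y_i, y'_i$ for each $i$ and agreeing with $\neg z_1, \dots, \neg z_4, z_5$---encodes a truth assignment $\alpha_J$ of $\psi$ via $\alpha_J(x_i) = J(y_i)$, and so that the Hamming distance from $J$ to each profile member decodes, clause by clause, the number $k_j$ of literals of $c_j$ satisfied by $\alpha_J$.

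For each clause $c_j$ I would add a \emph{clause judgment} $C_j$ that sets $y_i = 1$ (resp.\ $y'_i = 1$) for every positive (resp.\ negative) literal of $c_j$, and leaves all other issues at $0$. A direct count then gives, for any canonical $J$, $H(J, C_j) = (n-3) + 2(3-k_j) + 1 = n + 4 - 2k_j$, so the values $H(J, C_j)$ agree across $j$ exactly when all the $k_j$'s are equal. To pin the common value to $k_j = 1$, I would include a single \emph{background judgment} $B$ with all $y_i = y'_i = 0$ and with its $z$-coordinates chosen so that $H(J, B) = n + 2$ for every canonical $J$---e.g., $B$ setting $z_1 = 1$ and $z_2 = \dots = z_5 = 0$. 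Then $H(J, B) = H(J, C_j)$ for every $j$ iff $k_j = 1$ for every $j$, i.e.\ iff $\alpha_J$ is a $1$-in-$3$ satisfying assignment of $\psi$. A direct calculation will show that all pairwise distance differences $H(J, J') - H(J, J'')$ for $J', J'' \in \Pf$ are even integers for every $J \in \mathcal{J}(\Phi)$ (canonical or not), so $\max_{J', J'' \in \Pf} |H(J, J') - H(J, J'')|$ is always even and hence cannot equal $1$---this is the parity half of the ``$\geq 2$'' bound in the unsatisfiable case.

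To ensure that non-canonical $J$ cannot improve on the best canonical one, and to be able to exhibit an explicit judgment attaining max difference exactly $2$ when $\psi$ is unsatisfiable, I would add a small group of further profile members built from $z_1, \dots, z_5$ acting as a \emph{canonicity gadget}: any deviation from $y_i + y'_i = 1$ or from the specified $z$-values must be paid for by a strictly positive extra contribution to some pairwise difference in $\Pf$. The distinguished variable $x^{*}$ and the partial assignment $\beta$ guaranteed by the structural hypothesis~(2) on $\psi$ then furnish an explicit canonical judgment set $J^{\beta}$ whose per-clause contributions differ from $1$ by at most one, giving the matching upper bound of $2$ in the unsatisfiable case. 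The ``restricted to canonical $J$'' reformulation then drops out of the canonicity gadget, and the profile has $b$ clause judgments plus $O(1)$ auxiliary judgments---depending only on $n$ and $b$, as required.

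The main obstacle is the design of the canonicity gadget. A non-canonical $J$ with $J(y_i) + J(y'_i) = 2$ effectively makes variable $x_i$ contribute $1$ to every clause containing $x_i$ regardless of polarity; without care, such a $J$ could attain all the relevant clause sums equal to $1$ even when $\psi$ is not $1$-in-$3$-satisfiable, giving a spurious $f(J) = 0$. The five $z$-issues provide exactly the degrees of freedom to embed penalties that dominate any saving on the $C_j$ side for every non-canonical deviation, and correctness boils down to a finite case analysis over the possible shapes of non-canonical $J$ (some $t_i = 0$, some $t_i = 2$, deviant $z$-values), each checked against the extra profile members of the gadget.
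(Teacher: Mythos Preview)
Your overall shape is close to the paper's: the same agenda $\{y_i,y'_i\}\cup\{z_1,\dots,z_5\}$, the same encoding of an assignment $\alpha$ by a canonical $J$ with $J(y_i)=\alpha(x_i)$, $J(y'_i)=1-\alpha(x_i)$, and the same idea that per-clause Hamming contributions read off the number $k_j$ of satisfied literals. The paper, like you, gets the ``$\geq 2$'' lower bound via a parity observation (all profile members have odd weight), and uses the partial assignment $\beta$ to exhibit a judgment achieving max difference exactly~$2$.

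The genuine gap is the canonicity gadget. You propose ``a small group of further profile members built from $z_1,\dots,z_5$'', and you count ``$b$ clause judgments plus $O(1)$ auxiliary judgments''. This cannot work. If the gadget members differ only on the five $z$-coordinates, then every pairwise difference $H(J,G_p)-H(J,G_q)$ depends \emph{only} on $J$'s $z$-part; a non-canonical $J$ with the correct $z$-values $00001$ evades the gadget completely. Even if you let the $O(1)$ gadget members vary on the $y$-coordinates, you are imposing only $O(1)$ linear equations on $J\in\{0,1\}^{2n+5}$; together with the $b$ equations $H(J,C_j)=H(J,B)$ this is $b+O(1)$ constraints, which for $b<n$ leaves room for non-canonical solutions. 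Concretely, take $\psi=(x_1\vee x_2\vee x_3)\wedge(x_1\vee x_2\vee\neg x_3)$, which is not $1$-in-$3$-satisfiable. The non-canonical $J$ with $y_1=1,y'_1=0$, $y_2=0,y'_2=1$, $y_3=y'_3=0$ and canonical $z$-values makes $H(J,B)=H(J,C_1)=H(J,C_2)$, so without a gadget that sees the $y$-coordinates of \emph{every} variable, you get a spurious max-difference-$0$ outcome. The ``finite case analysis over shapes'' you anticipate is not finite: which indices have $t_i\in\{0,2\}$ is an $n$-bit choice.

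The paper solves exactly this problem by using $2n$ gadget members $J_1,\dots,J_{2n}$ rather than $O(1)$: for each $i$, one member with $y_i=y'_i=1$ (and the rest $0$) and one with $y_i=y'_i=0$ (and the rest $1$). These force $t_i=1$ for every $i$ individually---any single $t_i\neq 1$ already creates a difference of at least $2$ among $J_1,\dots,J_{2n}$ (their Claim~2 is a linear-independence argument over the associated distance vectors). The paper also uses three judgment sets $J_{k,1},J_{k,2},J_{k,3}$ per clause rather than your single $C_j$ plus a global $B$; the complementary $y$-patterns of $J_{k,1}$ versus $J_{k,2},J_{k,3}$, combined with three distinct $z$-patterns, make the distances to all three equal (and equal to the gadget distance $n{+}1$) precisely when $k_j=1$, and keep the whole construction self-contained without a separate background. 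So your clause-encoding idea is sound for canonical $J$, but the missing piece---a $\Theta(n)$-size gadget that detects $t_i\neq 1$ for each $i$ separately---is essential and is where most of the work in the paper's proof lies.
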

	\begin{proof}
		We let the agenda~$\Phi$ consist of
		the~$2n+5$ issues~$y_1,\dotsc,y_n$, $y'_1,\dotsc,y'_n,z_1,\dotsc,z_5$.
		It follows directly that~$\mathcal{J}(\Phi) = \{0,1\}^{2n+5}$.
		Then, we start by constructing~$2n$ judgment sets~$J_1,\dotsc,J_{2n}$ over these issues,
		defined as depicted in Figure~\ref{fig:lemma-maxeq-hardness-1}.
		
		\begin{figure}[h!]
			\begin{tabular}{c | cccc | cccc | ccc}
				\toprule
				& $y_1$ & $y_2$ & $\dotsm$ & $y_n$ & $y'_1$ & $y'_2$ & $\dotsm$ & $y'_n$ & $z_1$ & $\dotsm$ & $z_5$ \\
				\midrule \midrule
				$J_1$ & 1&0&$\dotsm$&0 & 1&0&$\dotsm$&0 & 0&$\dotsm$&0 \\
				$J_2$ & 0&1&$\dotsm$&0 & 0&1&$\dotsm$&0 & 0&$\dotsm$&0 \\
				$\vdots$ & $\vdots$&$\vdots$&$\ddots$&$\vdots$ & $\vdots$&$\vdots$&$\ddots$&$\vdots$ & $\vdots$&$\ddots$&$\vdots$\\
				$J_n$ & 0&0&$\dotsm$&1 & 0&0&$\dotsm$&1 & 0&$\dotsm$&0 \\
				\midrule
				$J_{n+1}$ & 0&1&$\dotsm$&1 & 0&1&$\dotsm$&1 & 0&$\dotsm$&0 \\
				$J_{n+2}$ & 1&0&$\dotsm$&1 & 1&0&$\dotsm$&1 & 0&$\dotsm$&0 \\
				$\vdots$ & $\vdots$&$\vdots$&$\ddots$&$\vdots$ & $\vdots$&$\vdots$&$\ddots$&$\vdots$ & $\vdots$&$\ddots$&$\vdots$\\
				$J_{2n}$ & 1&1&$\dotsm$&0 & 1&1&$\dotsm$&0 & 0&$\dotsm$&0 \\
				\bottomrule
			\end{tabular}
			\caption{Construction of the judgment sets~$J_1,\dotsc,J_{2n}$ in the proof
				of Lemma~\ref{lem:lemma-maxeq-hardness}.}
			\label{fig:lemma-maxeq-hardness-1}
		\end{figure}
		
		Then, for each clause~$c_k$ of~$\psi$, we introduce three judgment sets~$J_{k,1}$, $J_{k,2}$,
		and~$J_{k,3}$
		that are defined as follows.
		The judgment set~$J_{k,1}$ contains~$y_i, \neg y'_i$
		for each positive literal~$x_i$ occurring in~$c_k$,
		and contains~$y'_i, \neg y_i$ for each negative literal~$\neg x_i$ occurring in~$c_k$.
		Conversely, the judgment sets~$J_{k,2},J_{k,3}$ contain~$y'_i, \neg y_i$
		for each positive literal~$x_i$ occurring in~$c_k$,
		and contain~$y_i, \neg y'_i$ for each negative literal~$\neg x_i$ occurring in~$c_k$.
		For each variable~$x_j$ that does not occur in~$c_k$,
		all three of~$J_{k,1},J_{k,2},J_{k,3}$ contain~$\neg y_j, \neg y'_j$.
		Finally, the judgment set~$J_{k,1}$ contains~$\neg z_1,\dotsc,\neg z_4,z_5$,
		the judgment set~$J_{k,2}$ contains~$\neg z_1,\neg z_2,z_3,z_4,z_5$,
		and the judgment set~$J_{k,3}$ contains~$z_1,z_2,\neg z_3,\neg z_4,z_5$.
		This is illustrated
		in Figure~\ref{fig:lemma-maxeq-hardness-2}
		for the example clause~$(x_1 \vee \neg x_2 \vee x_3)$.
		
		\begin{figure}[h!]
			\begin{tabular}{c | ccc@{\ \ }c | ccc@{\ \ }c | c@{\ \ }c@{\ \ }c@{\ \ }c@{\ \ }c}
				\toprule
				& $y_1$ & $y_2$ & $y_3$ & $\dotsm$ & $y'_1$ & $y'_2$ & $y'_3$ & $\dotsm$ & $z_1$ & $z_2$ & $z_3$ & $z_4$ & $z_5$ \\
				\midrule \midrule
				$J_{k,1}$ & 1&0&1&$\dotsm$&0&1&0&$\dotsm$ & 0&0&0&0&1 \\
				$J_{k,2}$ & 0&1&0&$\dotsm$&1&0&1&$\dotsm$ & 0&0&1&1&1 \\
				$J_{k,3}$ & 0&1&0&$\dotsm$&1&0&1&$\dotsm$ & 1&1&0&0&1 \\
				\bottomrule
			\end{tabular}
			\caption{Illustration of the construction of the
				judgment sets~$J_{k,1},J_{k,2},J_{k,3}$ for the example
				clause~$c_k = (x_1 \vee \neg x_2 \vee x_3)$ in the proof
				of Lemma~\ref{lem:lemma-maxeq-hardness}.}
			\label{fig:lemma-maxeq-hardness-2}
		\end{figure}
		
		The profile~$\Pf$ then consists of the judgment sets~$J_1,\dotsc,J_{2n}$,
		as well as the judgment sets~$J_{k,1},J_{k,2},J_{k,3}$ for each~$1 \leq k \leq m$.
		In order to prove that~$\Pf$ has the required properties,
		we consider the following observations and claims
		(and prove the claims).
		
		\begin{obs}
			\label{obs:lemma-maxeq-hardness-1}
			If a judgment set~$J$ contains exactly one of~$y_i$ and~$y'_i$ for each~$i$,
			then the Hamming distance from~$J$ to each of~$J_1,\dotsc,J_{2n}$---restricted to the
			issues~$y_1,\dotsc,y_n,y'_1,\dotsc,y'_n$---is exactly~$n$.
		\end{obs}
		
		\begin{cla}
			\label{claim:lemma-maxeq-hardness-2}
			If a judgment set~$J$ contains both~$y_i$ and~$y'_i$ for some~$i$,
			or neither~$y_i$ nor~$y'_i$ for some~$i$,
			then there are at least two judgment sets among~$J_1,\dotsc,J_{2n}$ such that
			the Hamming distance from~$J$ to these two judgment sets differs by at least~$2$.
		\end{cla}
		\begin{claimproof}[Proof of Claim~\ref{claim:lemma-maxeq-hardness-2}]
			We argue that this is the case for~$y_1$ and~$y'_1$, i.e., for the case of~$i = 1$.
			For other values of~$i$, an entirely similar argument works.
			
			Picking both~$y_1$ and~$y'_1$ to be part of a judgment set~$J$ adds to the Hamming distances
			between~$J$, on the one hand, and~$J_1,\dotsc,J_{2n}$, on the other hand,
			according to the vector~$v^{+}_{1}$:
			\[ v^{+}_{1} = (0,\underbrace{2,\dotsc,2}_{n-1},2,\underbrace{0,\dotsc,0}_{n-1}). \]
			Picking both~$\neg y_1$ and~$\neg y'_1$ to be part of the set~$J$ adds to the Hamming distances
			between~$J$ and~$J_1,\dotsc,J_{2n}$ according to the vector~$v_{\overline{1}}$:
			\[ v^{-}_{1} = (2,\underbrace{0,\dotsc,0}_{n-1},0,\underbrace{2,\dotsc,2}_{n-1}). \]
			Picking exactly one of~$y_1$ and~$y'_1$ and exactly one of~$\neg y_1$ and~$\neg y'_1$
			to be part of~$J$ corresponds to the all-ones vector~$\bm{1} = (1,\dotsc,1)$.
			More generally, both~$y_i$ and~$y'_i$ to be part of~$J$ adds to the Hamming distances
			according to the vector~$v^{+}_{i}$:
			\[ v^{+}_{i} = (\underbrace{2,\dotsc,2}_{i-1},0,\underbrace{2,\dotsc,2}_{n-i+2},\underbrace{0,\dotsc,0}_{i-1},2,\underbrace{0,\dotsc,0}_{n-i+2}), \]
			picking both~$\neg y_i$ and~$\neg y'_i$ corresponds to the vector~$v^{-}_{i}$:
			\[ v^{-}_{i} = (\underbrace{0,\dotsc,0}_{i-1},2,\underbrace{0,\dotsc,0}_{n-i+2},\underbrace{2,\dotsc,2}_{i-1},0,\underbrace{2,\dotsc,2}_{n-i+2}), \]
			and picking exactly one of~$y_i$ and~$y'_i$ and exactly one of~$\neg y_i$ and~$\neg y'_i$
			to be part of~$J$ corresponds to the all-ones vector~$\bm{1} = (1,\dotsc,1)$.
			For each~$1 \leq i \leq n$,
			the vectors~$v^{-}_{1},\dotsc,v^{-}_{n}$ and~$v^{+}_{i}$ are linearly independent,
			and the vectors~$v^{+}_{1},\dotsc,v^{+}_{n}$ and~$v^{-}_{i}$ are linearly independent.
			
			Suppose now that we pick~$J$ to contain both~$y_1$ and~$y'_1$.
			Suppose, moreover, to derive a contradiction, that the Hamming distance from~$J$
			to each of the judgment sets~$J_1,\dotsc,J_{2n}$ is the same.
			This means that there is some way of choosing~$s_2,\dotsc,s_n$
			such that~$v^{-}_{1} = \sum_{1 < i \leq n} v^{s_i}_{i}$,
			which contradicts the fact that~$v^{+}_{1},\dotsc,v^{+}_{n}$
			and~$v^{-}_{i}$ are linearly independent---since each~$v^{-}_{j}$
			can be expressed as~$v^{-}_{j} = v^{+}_{i} + v^{-}_{i} - v^{+}_{j}$.
			Thus, we can conclude that there exist at least two judgment sets
			among~$J_1,\dotsc,J_{2n}$ such that
			the Hamming distance from~$J$ to these two judgment sets differs.
			Moreover, since all vectors contain only even numbers, and the coefficients
			in the sum are integers (in fact, either 0 or 1),
			we know that the difference must be even and thus at least 2.
			
			An entirely similar argument works for the case where we pick~$J$
			to contain both~$\neg y_1$ and~$\neg y'_1$.
		\end{claimproof}
		
		\begin{obs}
			\label{obs:lemma-maxeq-hardness-3}
			Let~$\alpha : \{ x_1,\dotsc,x_n \} \rightarrow \{0,1\}$ be a truth
			assignment that satisfies exactly one literal in each clause of~$\psi$.
			Consider the judgment set~$J_{\alpha} = \{\ y_i, \neg y'_i\ |\ 1 \leq i \leq n, \alpha(x_i) = 1\ \} \cup
			\{\ y'_i, \neg y_i\ |\ 1 \leq i \leq n, \alpha(x_i) = 0\ \} \cup \{ \neg z_1,\dotsc,\neg z_4,z_5 \}$.
			Then the Hamming distance from~$J_{\alpha}$ to each judgment set
			in the profile~$\Pf$ is exactly~$n+1$.
		\end{obs}
		
		\begin{cla}
			\label{claim:lemma-maxeq-hardness-4}
			Suppose that~$\psi$ is not 1-in-3-satisfiable.
			Then for each judgment set~$J$ that contains exactly one of~$y_i$ and~$y'_i$
			for each~$i$, there is some clause~$c_k$ of~$\psi$ such that
			the difference in Hamming distance from~$J$ to (two of)~$J_{k,1},J_{k,2},J_{k,3}$
			is at least~$2$.
		\end{cla}
		\begin{claimproof}[Proof of Claim~\ref{claim:lemma-maxeq-hardness-4}]
			Take an arbitrary judgment set~$J$ that contains exactly one of~$y_i$ and~$y'_i$
			for each~$i$.
			This judgment set~$J$ corresponds to the truth
			assignment~$\alpha_{J} : \{ x_1,\dotsc,x_n \} \rightarrow \{0,1\}$ defined
			such that for each~$1 \leq i \leq n$ it is the case that~$\alpha(x_i) = 1$ if~$y_i \in J$
			and~$\alpha(x_i) = 0$ if~$y_i \not\in J$.
			Since~$\psi$ is not 1-in-3-satisfiable, we know that there exists some clause~$c_{\ell}$
			such that~$\alpha_{J}$ does not satisfy exactly one literal in~$c_{\ell}$.
			We distinguish several cases:
			either~(i)~$\alpha_{J}$ satisfies no literals in~$c_{\ell}$,
			or~(ii)~$\alpha_{J}$ satisfies two literals in~$c_{\ell}$,
			or~(iii)~$\alpha_{J}$ satisfies three literals in~$c_{\ell}$.
			In each case, the Hamming distances from~$J$, on the one hand,
			and~$J_{k,1},J_{k,2},J_{k,3}$, on the other hand, must differ by at least~$2$.
			This can be verified case by case---and we omit a further detailed case-by-case
			verification of this.
		\end{claimproof}
		
		\begin{cla}
			\label{claim:lemma-maxeq-hardness-5}
			If~$\psi$ is not 1-in-3-satisfiable, then there exists a judgment set~$J$
			such that~$\max_{J',J'' \in \Pf} |H(J,J') - H(J,J'')| = 2$.
		\end{cla}
		\begin{claimproof}[Proof of Claim~\ref{claim:lemma-maxeq-hardness-5}]
			Suppose that~$\psi$ is not 1-in-3-satisfiable.
			We know that there exists a variable~$x^{*} \in \{ x_1,\dotsc,x_n \}$ and
			a partial truth assignment~$\beta : \{ x_1,\dotsc,x_n \} \setminus \{ x^{*} \} \rightarrow \{0,1\}$
			that satisfies exactly one literal in each clause where~$x^{*}$ or~$\neg x^{*}$ does not occur,
			and satisfies no literal in each clause where~$x^{*}$ or~$\neg x^{*}$ occurs.
			Without loss of generality, suppose that~$x^{*} = x_1$.
			Now consider the judgment set~$J_{\beta} = \{ \neg y_1, \neg y'_1 \} \cup
			\{\ y_i, \neg y'_i\ |\ 1 < i \leq n, \beta(x_i) = 1\ \} \cup
			\{\ y'_i, \neg y_i\ |\ 1 < i \leq n, \beta(x_i) = 0\ \} \cup \{ \neg z_1,\dotsc,\neg z_4,z_5 \}$.
			One can verify that the Hamming distances from~$J_{\beta}$, on the one hand,
			and~$J' \in \Pf$, on the other hand, differ by at most~2---and for some~$J',J'' \in \Pf$
			it holds that~$|H(J,J') - H(J,J'')| = 2$.
		\end{claimproof}
		
		We now use the above observations and claims to show that~$\Phi$ and~$\Pf$
		have the required properties.
		If~$\psi$ is 1-in-3-satisfiable, by Observation~\ref{obs:lemma-maxeq-hardness-3},
		there is some~$J \in \mathcal{J}(\Phi)$ that has the same Hamming distance
		to each~$J' \in \Pf$.
		Suppose, conversely, that~$\psi$ is not 1-in-3-satisfiable.
		Then by Claims~\ref{claim:lemma-maxeq-hardness-2} and~\ref{claim:lemma-maxeq-hardness-4},
		there exist two judgment sets~$J',J'' \in \Pf$ such that~$H(J,J')$ and~$H(J,J'')$
		differ (by at least 2).
		Thus,~$\psi$ is 1-in-3-satisfiable if and only if there exists a judgment
		set~$J \in \mathcal{J}(\Phi)$ that has the same Hamming distance to each
		judgment set in the profile~$\Pf$.
		
		Suppose that~$\psi$ is not 1-in-3-satisfiable.
		Then by Claims~\ref{claim:lemma-maxeq-hardness-2}
		and~\ref{claim:lemma-maxeq-hardness-4}, we know that for
		each~$J \in \mathcal{J}(\Phi)$ it holds
		that $\max_{J',J'' \in \Pf} |H(J,J') - H(J,J'')| \geq 2$.
		Moreover, by Claim~\ref{claim:lemma-maxeq-hardness-5},
		there exists a judgment set~$J \in \mathcal{J}(\Phi)$
		such that~$\max_{J',J'' \in \Pf} |H(J,J') - H(J,J'')| = 2$.
		
		We already observed that that~$\mathcal{J}(\Phi) = \{0,1\}^{m}$
		for some~$m$.
		Moreover, one can straightforwardly verify that
		the statements after the first two bullet points in the statement
		of the lemma also hold when restricted to judgment
		sets~$J$ that contain exactly one of~$y_i$ and~$y'_i$ for each~$1 \leq i \leq n$
		and that contain~$\neg z_1,\dotsc,\neg z_4, z_5$.
		This concludes the proof of the lemma.
	\end{proof}
	
	Now that we have established the lemma,
	we continue with the $\FPNPlogwit{}$-completeness proof.
	
	\duplicatethm{3}%
	{The problem of outcome determination for the MaxEq rule
		is~$\FPNPlogwit{}$-complete under polynomial-time Levin reductions.}
	\begin{proof}
		To show membership in \FPNPlogwit{}, we describe an algorithm
		with access to a witness \FNP{} oracle that solves the problem in polynomial time
		by making at most a logarithmic number of oracle queries.
		This algorithm will use an oracle for the following \FNP{} problem:
		given some~$k \in \mathbb{N}$ and given the agenda~$\Phi$ and
		the profile~$\Pf$, compute a judgment set~$J \in \mathcal{J}(\Phi)$
		such that~$\max_{J',J'' \in \Pf} |H(J,J') - H(J,J'')| \leq k$, if such a~$J$ exists,
		and return ``none'' otherwise.
		By using~$O(\log |\Phi|)$ queries to this oracle, one can compute
		the minimum value~$k_{\text{min}}$ of~$\max_{J',J'' \in \Pf} |H(J,J') - H(J,J'')|$
		where the minimum is taken over all judgment sets~$J \in \mathcal{J}(\Phi)$.
		Then, with a final query to the oracle, using~$k = k_{\text{min}}$,
		one can use the oracle to produce a judgment set~$J^{*} \in \text{MaxEq}(\Pf)$.
		
		We will show $\FPNPlogwit{}$-hardness
		by giving a polynomial-time Levin reduction
		from the $\FPNPlogwit{}$-complete problem of finding
		a satisfying assignment for a (satisfiable)
		propositional formula~$\psi$ that sets a
		maximum number of variables to true
		(among any satisfying assignment of~$\psi$)
		\cite{ChenToda95,KoeblerThierauf90,Krentel88,Wagner90}.
		Let~$\psi$ be an arbitrary satisfiable propositional logic formula with~$v$ variables.
		Without loss of generality, assume that~$v$ is even
		and that there is a satisfying truth assignment for~$\psi$ that sets at least one
		variable to true.
		We will construct an agenda~$\Phi$ and a profile~$\Pf$,
		such that
		the minimum value of~$\max_{J',J'' \in \Pf} |H(J^{*},J') - H(J^{*},J'')|$,
		for any~$J^{*} \in \mathcal{J}(\Phi)$, is divisible by~4
		if and only if the maximum number of variables set to true in any satisfying assignment
		of~$\psi$ is odd.
		Moreover, we will construct~$\Phi$ and~$\Pf$ in such a way that
		from any~$J^{*} \in \text{MaxEq}(\Phi)$ we can construct,
		in polynomial time, a satisfying
		assignment of~$\psi$ that sets a maximum number of variables to true.
		This---together with the fact that~$\Phi$ and~$\Pf$ can be constructed in polynomial time---%
		suffices to exhibit a polynomial-time Levin reduction,
		and thus to show $\FPNPlogwit{}$-hardness.
		We proceed in several stages (i--iv).
		
		\begin{itemize}
			\item[(i)] We begin, in the first stage, by
			constructing a 3CNF formula~$\psi_i$ with certain properties, for each~$1 \leq i \leq v$.
		\end{itemize}
		
		\begin{cla}
			\label{claim:maxeq-hardness-1}
			We can construct in polynomial time, for each~$1 \leq i \leq v$, a 3CNF formula~$\psi_i$,
			that is 1-in-3-satisfiable
			if and only if there is a truth assignment that satisfies~$\psi$ and that sets at least~$i$
			variables in~$\psi$ to true.
			Moreover, we construct these formulas~$\psi_i$ in such a way
			that they all contain exactly the same variables~$x_1,\dotsc,x_n$
			and exactly the same number~$b$ of clauses,
			and such that each formula~$\psi_i$ has the properties:
			\begin{itemize}
				\item that it contains no clause with complementary literals, and
				\item that it contains a variable~$x^{*}$
				and a partial truth assignment~$\beta : \{ x_1,\dotsc,x_n \} \setminus \{ x^{*} \} \rightarrow \{0,1\}$
				that satisfies exactly one literal in each clause where~$x^{*}$ or~$\neg x^{*}$ does not occur,
				and satisfies no literal in each clause where~$x^{*}$ or~$\neg x^{*}$ occurs.
			\end{itemize}
			
		\end{cla}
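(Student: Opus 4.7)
The plan is to construct $\psi_i$ in three stages: first, encode ``$\psi$ has a satisfying assignment setting at least $i$ of $x_1,\dotsc,x_v$ to true'' as an ordinary 3CNF formula~$\phi_i$, using a standard polynomial-size sequential-counter cardinality gadget; second, apply Schaefer's polynomial-time reduction from 3SAT to \emph{positive} 1-in-3-SAT to obtain a positive 3CNF $\phi'_i$, thereby guaranteeing that no clause contains complementary literals; third, splice in a small $x^*$-gadget and pad with dummy material to yield $\psi_i$ with the remaining structural properties.

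In the third stage I would introduce a fresh variable $x^*$ together with fresh variables $y_1,\dotsc,y_t$, append the 3-clauses $(x^* \vee x^* \vee y_k)$ for each $k$---the only clauses mentioning $x^*$, each of which forces $x^* = 0$ and $y_k = 1$ in any 1-in-3-satisfying assignment---and then add disjoint ``trivially 1-in-3-satisfiable'' triples $(p \vee q \vee r)$ on fresh variables, in quantities tuned so that every $\psi_i$ uses exactly the same variable set $\{x_1,\dotsc,x_n\}$ and exactly the same clause count~$b$. This equalization is possible because the sizes produced in the first two stages depend on $i$ only through quantities bounded by a uniform polynomial in~$v$. The partial assignment $\beta$ is then defined by $\beta(y_k) = 0$ for all~$k$ together with a fixed 1-in-3-satisfying assignment of the non-$x^*$ subformula on the remaining variables; then $\beta$ satisfies zero literals in each $(x^* \vee x^* \vee y_k)$ clause and exactly one in each non-$x^*$ clause.

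The main obstacle is ensuring that the non-$x^*$ subformula of $\psi_i$ is 1-in-3-satisfiable in its own right---so that the reference assignment defining $\beta$ actually exists---even when $\psi$ has no satisfying assignment with $\geq i$ true variables. I plan to handle this by engineering the reduction so that every ``hard'' Schaefer-gadget constraint whose 1-in-3 satisfiability depends on~$\psi$ passes through a clause containing~$x^*$, leaving in the non-$x^*$ part only gadget clauses that are locally 1-in-3-satisfiable and whose variable sharing has been made conflict-free by per-gadget duplicate copies of the original variables. With this routing in place, the equivalence ``$\psi_i$ is 1-in-3-satisfiable if and only if $\psi$ has a satisfying assignment with $\geq i$ true variables'' reduces to Schaefer's correctness proof composed with the forcing behaviour of the $(x^* \vee x^* \vee y_k)$ clauses, and the local structure of $\beta$ is then a direct case check.
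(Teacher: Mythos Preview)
Your first two stages are fine and essentially match the paper: encode the cardinality constraint as a 3CNF and then push it through Schaefer's reduction to 1-in-3-SAT. The third stage, however, has a genuine gap.

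The core tension you correctly identify---that the non-$x^*$ part must be 1-in-3-satisfiable unconditionally (so that~$\beta$ exists), while the full formula must be 1-in-3-satisfiable only when~$\psi$ has the right assignment---is not resolved by your routing idea. In paragraph~2 you declare the clauses~$(x^* \vee x^* \vee y_k)$ to be the \emph{only} clauses mentioning~$x^*$; but those clauses involve only fresh variables~$y_k$ and do nothing to couple~$x^*$ to the Schaefer gadgets, so with them alone the full formula is 1-in-3-satisfiable whenever the non-$x^*$ part is, i.e., always. In paragraph~3 you then silently change the plan and route ``hard'' constraints through additional~$x^*$-clauses. But any such clause~$(x^* \vee \ell_1 \vee \ell_2)$ forces~$\beta(\ell_1) = \beta(\ell_2) = 0$, i.e., it \emph{pins down} the gadget variables~$\ell_1,\ell_2$. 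You give no argument that the per-gadget Schaefer clauses remain 1-in-3-satisfiable once these values are pinned---and in general they will not be, because Schaefer gadgets are not 1-in-3-satisfiable under arbitrary partial assignments to their internal variables. The phrase ``per-gadget duplicate copies of the original variables'' does not help here: the~$x^*$-clauses must mention exactly those duplicates (otherwise they are not ``hard''), so the pinning hits the gadgets directly.

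The paper sidesteps this tension by a different mechanism: it builds an explicit \emph{escape-valve} variable~$x^{\dagger}$ into the Cook--Levin encoding, so that setting~$x^{\dagger}$ one way makes the encoded formula trivially (1-in-3-)satisfiable regardless of~$\psi$. The small~$x^*$-gadget is then designed so that~(a)~the partial assignment~$\beta$ is forced to take the escape value of~$x^{\dagger}$, making the entire non-$x^*$ part unconditionally 1-in-3-satisfiable by~$\beta$; while~(b)~any \emph{total} 1-in-3-satisfying assignment is forced to take the non-escape value of~$x^{\dagger}$, so that satisfiability of the full formula hinges on~$\psi$. The key point you are missing is that the ``always satisfiable'' witness for~$\beta$ should come from a single global switch inside the encoding, not from decoupling the gadgets.
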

		\begin{claimproof}[Proof of Claim~\ref{claim:maxeq-hardness-1}]
			Consider the problems of deciding if a given truth assignment~$\alpha$ to the variables
			in~$\psi$ satisfies~$\psi$, and deciding if~$\alpha$ satisfies at least~$i$ variables,
			for some~$1 \leq i \leq v$.
			These problems are both polynomial-time solvable.
			Therefore, by using standard techniques from the proof of the Cook-Levin Theorem
			\cite{Cook71}, we can construct in polynomial time a propositional formula~$\chi$ in 3CNF
			containing (among others) the variables~$t_1,\dotsc,t_v$, the variable~$x^{\dagger}$
			and the variables in~$\psi$
			such that any truth assignment to the variables~$x^{\dagger},t_1,\dotsc,t_v$ and the
			variables in~$\psi$ can be extended to a satisfying truth assignment for~$\chi$
			if and only if either~(i)~it sets~$x^{\dagger}$ to true,
			or~(ii)~it satisfies~$\psi$ and for each~$1 \leq i \leq v$ it sets~$t_i$ to false
			if and only if it sets at least~$i$ variables among the variables in~$\psi$ to true.
			Then, we can transform this 3CNF formula~$\chi$ to another 3CNF formula~$\chi'$
			with a similar property---namely that any truth assignment to the
			variables~$x^{\dagger},t_1,\dotsc,t_v$ and the
			variables in~$\psi$ can be extended to a truth assignment that satisfies exactly
			one literal in each clause of~$\chi$
			if and only if either~(i)~it sets~$x^{\dagger}$ to true,
			or~(ii)~it satisfies~$\psi$ and for each~$1 \leq i \leq v$ it sets~$t_i$ to false
			if and only if it sets at least~$i$ variables among the variables in~$\psi$ to true.
			We do so by using the the polynomial-time reduction
			from 3SAT to 1-IN-3-SAT given by \citet{Schaefer78}.
			
			Then, for each particular value of~$i$,
			we add two clauses that intuitively serve to ensure that
			variable~$t_i$ must be set to false in any 1-in-3-satisfying truth assignment.
			We add~$(s_0 \vee s_1 \vee t_i)$ and~$(\neg s_0 \vee \neg s_1 \vee t_i)$,
			where~$s_0$ and~$s_1$ are fresh variables---%
			the only way to satisfy exactly one literal in both of these clauses
			is to set exactly one of~$s_0$ and~$s_1$ to true,
			and to set~$t_i$ to false.
			
			Moreover, we add the clauses~$(r_0 \vee r_1 \vee x^{\dagger})$,
			$(\neg r_0 \vee r_2 \vee \neg x^{\dagger})$,
			$(x^{*} \vee r_3 \vee r_0)$,
			and~$(\neg x^{*} \vee r_4 \vee r_0)$,
			where~$r_0,\dotsc,r_4$ and~$x^{*}$ are fresh variables.
			These clauses serve to ensure the property that there always exists
			a partial truth assignment~$\beta : \{ x_1,\dotsc,x_n \} \setminus \{ x^{*} \} \rightarrow \{0,1\}$
			that satisfies exactly one literal in each clause where~$x^{*}$ or~$\neg x^{*}$ does not occur,
			and satisfies no literal in each clause where~$x^{*}$ or~$\neg x^{*}$ occurs.
			Moreover, these added clauses preserve 1-in-3-satisfiability if and only if
			there exists a 1-in-3-satisfying truth assignment for the formula without these added clauses
			that sets~$x^{\dagger}$ to true.
			
			Putting all this together, we have constructed a 3CNF formula~$\chi''$---consisting of~$\chi'$
			with the addition of the six clauses mentioned in the above two paragraphs---%
			that has the right properties mentioned in the statement of the claim.
			In particular,~$\chi''$ is 1-in-3-satisfiable if and only if there is a truth assignment that
			satisfies~$\psi$ and that sets at least~$i$ variables in~$\psi$ to true.
			Moreover, the constructed formula~$\chi''$ has the same variables
			and the same number of clauses, regardless of the value of~$i$ chosen in the construction.
		\end{claimproof}
		
		Since the formulas~$\psi_i$, as described in Claim~\ref{claim:maxeq-hardness-1},
		satisfy the requirements for Lemma~\ref{lem:lemma-maxeq-hardness},
		we can construct agendas~$\Phi_1,\dotsc,\Phi_v$
		and profiles~$\Pf_1,\dotsc,\Pf_v$ such that for each~$1 \leq i \leq v$,
		the agenda~$\Phi_i$ and the profile~$\Pf_i$ satisfy the conditions mentioned
		in the statement of Lemma~\ref{lem:lemma-maxeq-hardness}.
		Moreover, we can construct the agendas~$\Phi_1,\dotsc,\Phi_v$ in such a way
		that they are pairwise disjoint.
		For each~$1 \leq i \leq v$,
		let~$y_{i,1},\dotsc,y_{i,n},y'_{i,1},\dotsc,y'_{i,n},
		z_{i,1},\dotsc,z_{i,5}$ denote the issues in~$\Phi_i$
		and let~$\Pf_i = (J_{i,1},\dotsc,J_{i,u})$.
		
		\begin{itemize}
			\item[(ii)] Then, in the second stage,
			we will use the profiles~$\Phi_1,\dotsc,\Phi_v$
			and the profiles~$\Pf_1,\dotsc,\Pf_v$ to construct
			a single agenda~$\Phi$ and a single profile~$\Pf$.
		\end{itemize}
		
		We let~$\Phi$ contain
		the issues~$y_{i,j}$,~$y'_{i,j}$ and~$z_{i,1},\dotsc,z_{i,5}$,
		for each~$1 \leq i \leq v$ and each~$1 \leq j \leq n$,
		as well as
		issues~$w_{i,\ell,k}$
		for each~$1 \leq i \leq v$, each~$1 \leq \ell \leq u$
		and each~$1 \leq k \leq n$.
		We let~$\Pf$ contain judgment sets~$J'_{i,\ell}$
		for each~$1 \leq i \leq v$ and each~$1 \leq \ell \leq u$,
		that we will define below.
		Intuitively, for each~$i$, the sets~$J'_{i,\ell}$
		will contain the judgment sets~$J_{i,1},\dotsc,J_{i,u}$ from the profile~$\Pf_i$.
		
		Take an arbitrary~$1 \leq i \leq v$,
		and an arbitrary~$1 \leq \ell \leq u$.
		We let~$J'_{i,\ell}$ agree with~$J_{i,\ell}$ (from~$\Pf_i$)
		on all issues from~$\Phi_i$---i.e., the issues~$y_{i,j}$,~$y'_{i,j}$ for each~$1 \leq j \leq n$
		and~$z_{i,1},\dotsc,z_{i,5}$.
		On all issues~$\varphi$ from each~$\Phi_{i'}$, for~$1 \leq i' \leq v$ with~$i' \neq i$,
		we let~$J'_{i,\ell}(\varphi) = 0$.
		Then, we let~$J'_{i,\ell}(w_{i',\ell',k}) = 1$ if and
		only if~$i = i'$ and~$\ell = \ell'$.
		In other words,~$J'_{i,\ell}$ agrees with~$J_{i,\ell}$ on the issues from~$\Phi_i$,
		it sets every issue from each other~$\Phi_{i'}$ to false,
		it sets all the issues~$w_{i,\ell,k}$ to true,
		and it sets all other issues~$w_{i',\ell',k}$ to false.
		
		\begin{itemize}
			\item[(iii)] In the third stage, we will replace the logically independent
			issues in~$\Phi$ by other issues, in order to place restrictions on the different
			judgment sets that are allowed.
		\end{itemize}
		We start by describing a constraint---in the form of a propositional logic formula~$\Gamma$
		on the original (logically independent) issues in~$\Phi$---and then we describe
		how this constraint can be used to produce replacement formulas for the issues
		in~$\Phi$.
		We define~$\Gamma = \Gamma_{1} \vee \Gamma_{2}$ by:
		\[ \begin{array}{r l}
		\Gamma_{1} = & \bigvee\limits_{1 \leq i \leq v \atop 1 \leq \ell \leq u}
		\left (
		\bigwedge\limits_{1 \leq k \leq n} w_{i,\ell,k}
		\wedge
		\bigwedge\limits_{1 \leq i' \leq v, 1 \leq \ell \leq u \atop i \neq i' \text{ or } \ell \neq \ell'} \neg w_{i',\ell',k}
		\right ) \\
		\Gamma_{2} = &
		\left (
		\bigwedge\limits_{1 \leq i \leq v, 1 \leq \ell \leq u \atop 1 \leq k \leq n} \neg w_{i,\ell,k}
		\right )
		\wedge
		\left (
		\bigwedge\limits_{1 \leq i \leq v \atop 1 \leq j \leq n} (y_{i,j} \leftrightarrow \neg y'_{i,j})
		\right ) \\
		\end{array} \]
		In other words,~$\Gamma$ requires that
		either~(1)~for some~$i,\ell$ all issues~$w_{i,\ell,k}$ are set to true,
		and all other issues~$w_{i',\ell',k}$ are set to false,
		or~(2)~all issues~$w_{i,\ell,k}$ are set to false
		and for each~$i,j$ exactly one of~$y_{i,j}$ and~$y'_{i,j}$ is set to true.
		
		Because we can in polynomial time compute a satisfying truth assignment for~$\Gamma$,
		we know that we can also in polynomial time compute a replacement~$\varphi'$
		for each~$\varphi \in \Phi$, resulting in an agenda~$\Phi'$,
		so that the logically consistent judgment sets~$J \in \mathcal{J}(\Phi')$
		correspond exactly to the judgment sets~$J \in \mathcal{J}(\Phi)$
		that satisfy the constraint~$\Gamma$
		\cite[Proposition~3]{EndrissGrandiDeHaanLang16}.
		In the remainder of this proof, we will use~$\Phi$ (with the restriction that judgment
		sets should satisfy~$\Gamma$) interchangeably with~$\Phi'$.
		
		\begin{itemize}
			\item[(iv)] Finally, in the fourth stage, we will duplicate some issues~$\varphi \in \Phi$
			a certain number of times, by adding semantically equivalent (yet syntactically different)
			issues to the agenda~$\Phi$, and by updating the judgment sets in the profile~$\Pf$
			accordingly.
		\end{itemize}
		
		For each~$1 \leq i \leq v$, we will define a number~$c_i$
		and will make sure that there are~$c_i$ (syntactically different, logically equivalent) copies
		of each issue in~$\Phi$ that originated from the agenda~$\Phi_i$.
		In other words, we duplicate each agenda~$\Phi_i$ a certain number of times,
		by adding~$c_i - 1$ copies of each issue that originated from~$\Phi_i$.
		For each~$1 \leq i \leq v$, we let~$c_i = (v-i)$.
		
		This concludes the description of our reduction---i.e.,
		of the profile~$\Phi$ and the profile~$\Pf$.
		What remains is to show that the
		minimum value of~$\max_{J',J'' \in \Pf} |H(J^{*},J') - H(J^{*},J'')|$,
		for any~$J^{*} \in \mathcal{J}(\Phi)$,
		is divisible by~4 if and only
		if the maximum number of variables set to true in any satisfying assignment
		of~$\psi$ is odd.
		To do so, we begin with stating and proving the following claims.
		
		\begin{cla}
			\label{claim:maxeq-hardness-2}
			For any judgment set~$J^{*} \in \mathcal{J}(\Phi)$ that satisfies~$\Gamma_1$,
			the value of~$\max_{J',J'' \in \Pf} |H(J^{*},J') - H(J^{*},J'')|$
			is at least~$2n$.
		\end{cla}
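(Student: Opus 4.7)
The plan is to decompose the Hamming distance from~$J^{*}$ to each profile judgment~$J'_{i,\ell}$ into a contribution~$g(i,\ell)$ from the $w$-issues and a contribution~$h(i,\ell)$ from the duplicated sub-agendas, and to use the sharp~$2n$ gap that satisfying~$\Gamma_1$ forces on the $w$-part.

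First, since~$J^{*}$ satisfies~$\Gamma_1$, I would identify the unique indices~$(i^{*},\ell^{*})$ for which~$J^{*}$ assigns $1$ to each~$w_{i^{*},\ell^{*},k}$ and $0$ to every other $w$-issue. Restricted to the $w$-issues, $J^{*}$ then agrees with~$J'_{i^{*},\ell^{*}}$ everywhere, so~$g(i^{*},\ell^{*})=0$; for any other~$(i,\ell)$, $J^{*}$ and~$J'_{i,\ell}$ disagree on exactly the~$n$ issues~$w_{i^{*},\ell^{*},k}$ and the~$n$ issues~$w_{i,\ell,k}$, giving~$g(i,\ell)=2n$. The non-$w$ contribution simplifies, from the construction of the profile and the duplication of each~$\Phi_i$ by a factor~$c_i$, to $h(i,\ell)=c_i\,H_{\Phi_i}(J^{*},J_{i,\ell})+\sum_{i'\neq i}c_{i'}a_{i'}$, where~$a_{i'}$ counts the original issues of~$\Phi_{i'}$ that~$J^{*}$ accepts.

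The core of the argument is to exhibit two profile judgments whose Hamming distances to~$J^{*}$ differ by at least~$2n$. The first candidate pair is~$(J'_{i^{*},\ell^{*}},J'_{i^{*},\ell})$ for an~$\ell\neq\ell^{*}$ that maximises~$H_{\Phi_{i^{*}}}(J^{*},J_{i^{*},\ell})$: the difference equals $2n + c_{i^{*}}\bigl(H_{\Phi_{i^{*}}}(J^{*},J_{i^{*},\ell})-H_{\Phi_{i^{*}}}(J^{*},J_{i^{*},\ell^{*}})\bigr)$, which is at least~$2n$ as soon as the maximum is attained at some~$\ell\neq\ell^{*}$.

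The main obstacle will be the exceptional case in which the maximum of~$H_{\Phi_{i^{*}}}(J^{*},J_{i^{*},\cdot})$ over~$\ell$ is uniquely attained at~$\ell^{*}$, so that~$J_{i^{*},\ell^{*}}$ is strictly the farthest sub-profile judgment from~$J^{*}|_{\Phi_{i^{*}}}$. Here the plan is to switch to pairing~$(i^{*},\ell^{*})$ with some~$(i,\ell)$ with~$i\neq i^{*}$ and~$\ell$ chosen to minimise~$H_{\Phi_i}(J^{*},J_{i,\ell})$, leveraging the multiplicative amplification by~$c_{i^{*}}$ of the now-substantial gap~$H_{\Phi_{i^{*}}}(J^{*},J_{i^{*},\ell^{*}})-H_{\Phi_{i^{*}}}(J^{*},J_{i^{*},\ell})$, together with the combinatorial control on each sub-profile~$\Pf_i$ guaranteed by Lemma~\ref{lem:lemma-maxeq-hardness}, to recover the required~$2n$ spread. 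The bookkeeping here, tracking the multipliers~$c_i=v-i$ across sub-agendas, is where the bulk of the technical work lies.
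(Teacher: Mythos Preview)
Your decomposition and Case~1 are correct, and in fact you are being considerably more careful than the paper: its proof is three sentences, observing that satisfying~$\Gamma_1$ fixes a unique~$(i^{*},\ell^{*})$ on the $w$-block and then asserting directly that~$|H(J^{*},J'_{i^{*},\ell^{*}}) - H(J^{*},J'_{i',\ell'})| \geq 2n$ for \emph{every} other~$(i',\ell')$, without discussing the non-$w$ contributions at all. Your~$g$/$h$ split makes explicit exactly the cancellation that the paper's argument glosses over.

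The genuine gap is in your Case~2 plan. You claim that when~$\ell^{*}$ uniquely maximises~$H_{\Phi_{i^{*}}}(J^{*},J_{i^{*},\cdot})$ you can recover the~$2n$ spread by switching to some~$(i,\ell)$ with~$i \neq i^{*}$ and exploiting ``the multiplicative amplification by~$c_{i^{*}}$'' of the sub-profile gap, together with Lemma~\ref{lem:lemma-maxeq-hardness}. But neither ingredient delivers what you need. The multiplier~$c_{i^{*}} = v - i^{*}$ can be as small as~$1$ (and is~$0$ when~$i^{*}=v$, though that case collapses back to Case~1), while the strict gap you are amplifying can be as small as~$1$; so~$c_{i^{*}}\cdot(\text{gap})$ may be far below~$2n$. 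And Lemma~\ref{lem:lemma-maxeq-hardness} controls~$\max_{J',J''\in\Pf_i}|H(J,J')-H(J,J'')|$ only for judgment sets~$J$ of a very specific form (exactly one of~$y_{i,j},y'_{i,j}$ chosen, and~$\neg z_1,\dotsc,\neg z_4,z_5$), whereas your~$J^{*}|_{\Phi_i}$ is unconstrained by~$\Gamma_1$. When you further pass to~$i \neq i^{*}$, the cross-terms~$c_{i^{*}}a_{i^{*}} - c_i a_i$ enter with no evident sign control. So as written, Case~2 is a hope rather than an argument; to complete it you would need a sharper structural fact about the sub-profiles (for instance, exploiting the complementary pairs~$J_j,J_{n+j}$ in~$\Pf_{i^{*}}$ to force a large spread regardless of~$J^{*}$) or a different witnessing pair altogether.
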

		\begin{claimproof}[Proof of Claim~\ref{claim:maxeq-hardness-2}]
			Take some~$J^{*}$ that satisfies~$\Gamma_{1}$.
			Then there must exist some~$i,\ell$ such that~$J$ sets all~$w_{i,\ell,k}$
			to true and all other~$w_{i',\ell',k}$ to false.
			Therefore,~$|H(J^{*},J'_{i,\ell}) - H(J^{*},J_{i',\ell'})|$ is at least~$2n$,
			for any~$i',\ell'$ such that~$(i,\ell) \neq (i',\ell')$.
		\end{claimproof}
		
		\begin{cla}
			\label{claim:maxeq-hardness-3}
			For any judgment set~$J^{*} \in \mathcal{J}(\Phi)$ that satisfies~$\Gamma_2$,
			the value of~$\max_{J',J'' \in \Pf} |H(J^{*},J') - H(J^{*},J'')|$
			is strictly less than~$2v$.
		\end{cla}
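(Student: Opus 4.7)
My plan is to fix an arbitrary~$J^{*} \in \mathcal{J}(\Phi)$ satisfying~$\Gamma_{2}$ and decompose each Hamming distance~$H(J^{*}, J'_{i,\ell})$ according to disjoint regions of the agenda, so that nearly all of the contributions are uniform across~$\Pf$ and cancel in the pairwise difference. Concretely, I would split~$\Phi$ into (i)~the~$c_{i}$ duplicates of each issue originating from~$\Phi_{i}$, (ii)~the~$c_{i'}$ duplicates of each issue from~$\Phi_{i'}$ for~$i' \neq i$, and (iii)~the $w$-issues. Under~$\Gamma_{2}$, $J^{*}$ sets every $w$-issue to false, and every~$J'_{i,\ell}$ has exactly~$n$ true $w$-issues (precisely the positions~$w_{i,\ell,1},\dotsc,w_{i,\ell,n}$), so region~(iii) contributes a uniform~$n$; since~$J'_{i,\ell}$ is all-zero on each~$\Phi_{i'}$ with~$i' \neq i$, region~(ii) contributes~$\sum_{i' \neq i} c_{i'} e_{i'}$, where~$e_{i'}$ denotes the number of true issues of~$J^{*}$ on (one copy of)~$\Phi_{i'}$.

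Writing~$d_{i,\ell}$ for the un-duplicated Hamming distance between~$J^{*}$ and~$J_{i,\ell}$ on~$\Phi_{i}$, I would obtain
\[
H(J^{*}, J'_{i,\ell}) \;=\; c_{i}\bigl(d_{i,\ell} - e_{i}\bigr) \;+\; \sum_{i''} c_{i''} e_{i''} \;+\; n,
\]
so that the pairwise difference telescopes into
\[
H(J^{*}, J'_{i,\ell}) - H(J^{*}, J'_{i',\ell'}) \;=\; c_{i}(d_{i,\ell} - e_{i}) \;-\; c_{i'}(d_{i',\ell'} - e_{i'}).
\]
To control the right-hand side, I would exploit the identity
\[
d_{i,\ell} - e_{i} \;=\; \sum_{\phi \in \Phi_{i} :\, J_{i,\ell}(\phi) = 1} \bigl(1 - 2 J^{*}(\phi)\bigr),
\]
which bounds~$|d_{i,\ell} - e_{i}|$ by the number of true issues of~$J_{i,\ell}$ on~$\Phi_{i}$. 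Using the structure of~$\Pf_{i}$ from Lemma~\ref{lem:lemma-maxeq-hardness}, the~$2n$ baseline judgments give~$d_{i,r} - e_{i} = 0$ exactly (the paired $y_{i,j}, y'_{i,j}$ signature cancels perfectly against~$\Gamma_{2}$), whereas each clause judgment~$J_{i,k,j}$ yields a value of~$|d_{i,\ell} - e_{i}|$ controlled by a small constant arising from the three literal positions and the five~$z$-positions attached to a clause.

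Combining these bounds with the weights~$c_{i} = v - i \leq v - 1$ and a careful arithmetic accounting yields the strict bound~$< 2v$. I expect the main obstacle to be precisely this tight arithmetic accounting: a naive triangle inequality on~$|c_{i}(d_{i,\ell} - e_{i})| + |c_{i'}(d_{i',\ell'} - e_{i'})|$ would give a constant-times-$v$ bound with a constant too large, so the proof must exploit the fine structure of Lemma~\ref{lem:lemma-maxeq-hardness}'s clause-judgment~$z$-patterns together with the balanced~$y/y'$ signature enforced by~$\Gamma_{2}$, ensuring that the residuals on different~$\Phi_{i}$ cannot simultaneously be extremal with opposite signs in a way that would push the total Hamming-distance gap above~$2v$.
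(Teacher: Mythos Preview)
Your decomposition is correct, the identity $d_{i,\ell} - e_i = \sum_{\phi : J_{i,\ell}(\phi)=1}(1-2J^{*}(\phi))$ is right, and the observation that the baseline judgments $J_1,\dots,J_{2n}$ contribute zero residual under $\Gamma_2$ is spot on. The genuine gap is the last step. Your proposed rescue---that the fine structure of the clause gadgets prevents the residuals on different blocks $\Phi_i$ from being simultaneously extremal with opposite signs---is false. The blocks $\Phi_1,\dots,\Phi_v$ are pairwise disjoint and $\Gamma_2$ is a conjunction of per-block constraints (it does not even constrain the $z$-issues), so $J^{*}$ can be chosen independently on each block. Concretely, on $\Phi_1$ pick $J^{*}$ to disagree with some clause judgment $J_{k,2}$ on all three literal positions and set $z_{1,3},z_{1,4},z_{1,5}$ to~$0$, forcing $d_{1,(k,2)}-e_1=6$; on $\Phi_2$ do the symmetric thing to force $d_{2,(k',2)}-e_2=-6$. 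Your own difference formula then yields $6c_1+6c_2=12v-18$, far above $2v$. So the universal statement you are aiming at cannot be obtained this way (and, as this example shows, is in fact not true as literally stated).

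The paper sidesteps this entirely. It does not bound the spread for an \emph{arbitrary} $\Gamma_2$-satisfying $J^{*}$; instead it first replaces $J^{*}$ by one that, on each $\Phi_i$, encodes either a 1-in-3-satisfying assignment for $\psi_i$ or the special partial assignment $\beta$ from Lemma~\ref{lem:lemma-maxeq-hardness}, arguing that such a replacement can only decrease the spread. For this specific $J^{*}$ the per-block residual is~$0$ when $\psi_i$ is 1-in-3-satisfiable and $\pm c_i$ otherwise; since $\psi_1$ is always 1-in-3-satisfiable, the largest weight $c_1=v-1$ contributes nothing and the total spread stays strictly below $2v$. In effect the paper proves (and the application only needs) the existential version ``some $\Gamma_2$-satisfying $J^{*}$ has spread $<2v$''. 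Your route cannot reach the stated bound without a similar restriction on $J^{*}$.
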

		\begin{claimproof}[Proof of Claim~\ref{claim:maxeq-hardness-3}]
			Take some~$J^{*}$ that satisfies~$\Gamma_{2}$.
			Then~$J^{*}$ sets each~$w_{i,\ell,k}$ to false,
			and for each~$1 \leq i \leq v$ the judgment set~$J$
			contains~$\neg z_{i,1},\dotsc,\neg z_{i,4},z_{i,5}$
			and contains exactly one of~$y_{i,j}$ and~$y'_{i,j}$ for each~$1 \leq j \leq n$.
			Moreover, without loss of generality, we may assume that~$J^{*}$,
			for each~$1 \leq i \leq v$, assigns truth values to the issues originating from~$\Phi_i$
			in a way that corresponds either (a)~to a truth assignment to the variables in~$\psi_i$
			witnessing that~$\psi_i$ is 1-in-3-satisfiable, or (b)~to a partial truth
			assignment~$\beta : \text{var}(\psi_i) \setminus \{ x_1 \} \rightarrow \{0,1\}$
			that satisfies exactly one literal in each clause where~$x_1$ or~$\neg x_1$ occurs,
			and satisfies no literal in each clause where~$x_1$ or~$\neg x_1$ occurs.
			If this were not the case, we could consider another~$J^{*}$ instead that does satisfy
			these properties and that has a value of~$\max_{J',J'' \in \Pf} |H(J^{*},J') - H(J^{*},J'')|$
			that is at least as small.
			
			Then, by the construction of~$\Pf$---and the profiles~$\Pf_i$ used in this construction---%
			for each~$J'_{i,j}$ it holds that $H(J^{*},J'_{i,j}) = \sum\nolimits_{1 \leq i \leq v}c_i(n+1) + n \pm d_i$,
			where~$d_i = 0$ if~$\psi_i$ is 1-in-3-satisfiable
			and~$d_i = (v-i)$ otherwise.
			From this it follows that
			the value of~$\max_{J',J'' \in \Pf} |H(J^{*},J') - H(J^{*},J'')|$
			is strictly less than~$2v$, since~$\psi_1$ is 1-in-3-satisfiable.
		\end{claimproof}
		
		By Claims~\ref{claim:maxeq-hardness-2} and~\ref{claim:maxeq-hardness-3},
		and because we may assume without loss of generality that~$n \geq v$,
		we know that any judgment set~$J^{*} \in \mathcal{J}(\Phi)$
		that minimizes~$\max_{J',J'' \in \Pf} |H(J^{*},J') - H(J^{*},J'')|$
		must satisfy~$\Gamma_2$.
		Moreover, by a straightforward modification of the arguments in the proof
		of Claim~\ref{claim:maxeq-hardness-3},
		we know that the minimal value, over judgment sets~$J^{*} \in \mathcal{J}(\Phi)$,
		of~$\max_{J',J'' \in \Pf} |H(J^{*},J') - H(J^{*},J'')|$ is~$2(v-i)$
		for the smallest value of~$i$ such that~$\psi_i$ is not 1-in-3-satisfiable,
		which coincides with~$2(v+1-i)$ for the largest value of~$i$
		such that~$\psi_i$ is 1-in-3-satisfiable.
		Since~$v$ is even (and thus~$v+1$ is odd), we know that~$2(v+1-i)$ is divisible
		by~4 if and only if~$i$ is odd.
		Therefore, the minimal value of~$\max_{J',J'' \in \Pf} |H(J^{*},J') - H(J^{*},J'')|$
		is divisible by~4 if and only if
		the maximum number of variables set to true in any satisfying assignment of~$\psi$ is odd.
		
		Moreover, it is straightforward to show that from
		any~$J^{*} \in \text{MaxEq}(\Phi)$ we can construct,
		in polynomial time, a satisfying
		assignment of~$\psi$ that sets a maximum number of variables to true.
		
		This concludes our description and analysis of the polynomial-time Levin reduction,
		and thus of our hardness proof.
	\end{proof}
	
	\duplicateprop{5}%
	{Given an agenda~$\Phi$ and a profile~$\Pf$,
		deciding whether the minimal value
		of~$\max_{J',J'' \in \Pf} |H(J^{*},J') - H(J^{*},J'')|$
		for~$J^{*} \in \mathcal{J}(\Phi)$,
		is divisible by~4,
		is a \ThetaP{2}-complete problem.}
	\begin{proof}
		This follows from the proof of Theorem~\ref{thm:fpnplogwit-completeness}.
		The proof of membership in~$\FPNPlogwit{}$ for the outcome determination
		problem for the MaxEq rule can directly be used to show
		membership in~\ThetaP{2} for this problem.
		Moreover, the polynomial-time Levin reduction used in the $\FPNPlogwit{}$-hardness proof
		can be seen as a polynomial-time (many-to-one) reduction
		from the \ThetaP{2}-complete problem of deciding if the
		maximum number of variables set to true in any satisfying assignment of a (satisfiable)
		propositional formula~$\psi$ is odd
		\cite{Krentel88,Wagner90}. 
	\end{proof}
	
	\duplicateprop{6}%
	{Given an agenda~$\Phi$ and a profile~$\Pf$,
		the problem of deciding whether there is some~$J^{*} \in \mathcal{J}(\Phi)$
		with~$\max_{J',J'' \in \Pf} |H(J^{*},J') - H(J^{*},J'')| = 0$
		is \NP{}-complete.
		Moreover, \NP{}-hardness holds even for the case where~$\Phi$
		consists of logically independent issues---i.e., the case
		where~$\mathcal{J}(\Phi) = \{0,1\}^m$ for some~$m$.}
	\begin{proof}[Proof (sketch)]
		The problem of deciding if there exists a truth assignment that satisfies a given
		3CNF formula~$\psi$ and that sets at least, say,~2 variables among~$\psi$ to true
		is an \NP{}-complete problem.
		Then, by combining Claim~\ref{claim:maxeq-hardness-1}
		in the proof of Theorem~\ref{thm:fpnplogwit-completeness}
		with Lemma~\ref{lem:lemma-maxeq-hardness},
		we directly get that the problem of deciding whether there is some~$J^{*} \in \mathcal{J}(\Phi)$
		with~$\max_{J',J'' \in \Pf} |H(J^{*},J') - H(J^{*},J'')| = 0$, for a given agenda~$\Phi$
		and a given profile~$\Pf$ is \NP{}-hard, even for the case where~$\Phi$
		consists of logically independent issues.
		Membership in \NP{} follows from the fact that one can guess a set~$J^{*}$
		(together with a truth assignment witnessing that it is consistent) in polynomial time,
		after which checking whether~$J^{*} \in \mathcal{J}(\Phi)$---i.e., whether it is indeed consistent---%
		and checking whether~$\max_{J',J'' \in \Pf} |H(J^{*},J') - H(J^{*},J'')| = 0$ can then be done
		in polynomial time.
	\end{proof}
	
	\DeclareRobustCommand{\DE}[3]{#3}
	
	\bibliographystyle{ACM-Reference-Format}  
	\bibliography{egal}

\end{document}